\documentclass[journal]{IEEEtran}
\usepackage{amsmath,amsfonts}
\usepackage{algorithmic}
\usepackage{algorithm}
\usepackage{array}
\usepackage[caption=false,font=normalsize,labelfont=sf,textfont=sf]{subfig}
\usepackage{textcomp}
\usepackage{stfloats}
\usepackage{url}
\usepackage[hidelinks]{hyperref}
\usepackage{verbatim}
\usepackage{graphicx}

\usepackage{amsmath,amssymb}
\usepackage{amsthm}

\theoremstyle{plain}
\newtheorem{theorem}{Theorem}
\newtheorem{lemma}{Lemma}
\newtheorem{proposition}{Proposition}
\newtheorem{corollary}{Corollary}

\theoremstyle{definition}
\newtheorem{definition}{Definition}
\newtheorem{assumption}{Assumption}

\theoremstyle{remark}

\usepackage{cite}
\begin{document}

\title{SCGFM-ART: Amortized Relational Transport for Structure-Centric Graph Foundation Models}

\author{Xiaodong He, Xincheng Wang, Zhao Kang%
\thanks{X. He, X. Wang, and Z. Kang are with the University of Electronic Science and Technology of China, Chengdu, China.
This work has been submitted to the IEEE for possible publication. Copyright may be transferred without notice, after which this version may no longer be accessible.
E-mail: hexiaodong24@126.com; Zkang@uestc.edu.cn.
The source code is available at:
\url{https://github.com/Xd-He/SCGFM-ART}.}%
}

\markboth{}{}

\maketitle

\begin{abstract}
Graph foundation models (GFMs) aim to learn transferable representations across severely heterogeneous graph domains. However, severe domain shifts in topology, graph scale, and feature semantics impede the construction of a unified, domain-agnostic representation space. To address this, we propose SCGFM-ART, a structure-centric GFM framework that aligns arbitrary graphs onto a shared relational atlas via Amortized Relational Transport (ART). The relational atlas serves as a universal coordinate system defined by a finite set of relational landmarks (bases), while ART directly predicts reusable, end-to-end graph-to-base transport plans, bypassing costly runtime Gromov-Wasserstein optimizations. Under this formulation, SCGFM-ART decomposes a graph into a unified representation: globally via its relational response coordinates relative to the atlas, and locally via its node-to-role structural correspondences. These correspondences project disparate node attributes into a canonical role space, resolving structural and semantic heterogeneity within a singular alignment interface. Rigorously modeling graphs and atlas bases as finite measured relational spaces, we establish coordinate fidelity bounds, prove stability under predicted transport plans, and derive an amortized coverage bound that guarantees our learning objective tightly surrogates ideal relational coverage. Benchmarked across 14 cross-domain graph- and node-level classification tasks, SCGFM-ART achieves state-of-the-art transferability, securing superior average ranks of 2.29 and 1.14, respectively. Topological perturbation analyses demonstrate that node-role transport retains fine-grained structural nuances beyond global coordinates. On real-world benchmarks, the amortized formulation yields 44.2×–85.1× faster frozen target-domain inference by avoiding iterative alignment at test time.
\end{abstract}

\begin{IEEEkeywords}
Large graph models, graph representation learning,
Gromov--Wasserstein distance, amortized optimal transport,
 cross-domain transfer.
\end{IEEEkeywords}

\section{Introduction}
\IEEEPARstart{F}{oundation} models have revolutionized representation learning across natural language processing and computer vision by leveraging large-scale pretraining to extract highly transferable representations~\cite{devlin2019bert,brown2020gpt3,dosovitskiy2021vit,he2022mae}.
This paradigm is rapidly extending to graph-structured domains, where Graph Foundation Models (GFMs) seek to capture domain-agnostic, reusable structural knowledge across diverse datasets and downstream tasks~\cite{mao2024gfm,wang2025git,he2026came,hu2026gtalign}.
To this end, recent advances have explored unified task interfaces, multi-domain topological alignment, transferable structural vocabularies, and theoretical bounds on cross-domain knowledge transfer~\cite{wang2025git,wang2025mdgfm,yuan2025bridge,sun2025riemanngfm}.

A central obstacle in building general-purpose GFMs stems from the intrinsic non-Euclidean heterogeneity across graph domains. Graphs originating from distinct environments lack a universal representation space: their structures vary dramatically in scale, edge density, and relational topology, while node attributes frequently differ in dimensionality, semantics, or availability. 
Consequently, cross-domain graph pretraining demands a unified mechanism capable of reconciling both \emph{structural heterogeneity} and \emph{feature heterogeneity}.
A robust GFM must therefore establish a canonical reference interface through which topological structures can be directly compared and node attributes can be consistently aligned.

Recent graph tokenization paradigms address this challenge by discretizing nodes, edges, or substructures into reusable structural tokens~\cite{kim2022tokengt,wang2025gqt,chen2025hight}.
While this offers a shared vocabulary across architectures, relational comparability across graphs of varying scales and topological regimes remains constrained by underlying structural shifts.
This raises a fundamental question: \emph{Can heterogeneous graphs be embedded into a shared relational reference system that simultaneously acts as a universal coordinate space for topology and a local alignment interface for node attributes?}

We address this question through a geometric perspective. 
If a finite collection of universal relational structures can serve as shared \emph{landmarks}, any arbitrary graph can be uniquely characterized by its geometric position relative to these landmarks—analogous to positioning an object within a global coordinate frame. 
Once a common set of landmarks is established, graphs with disparate cardinalities and domain-specific topologies can be projected into identical coordinate dimensions, establishing a shared structural interface for foundation-level representation learning.

Gromov--Wasserstein (GW) geometry provides a mathematically rigorous foundation for this formulation. 
By evaluating objects through their internal pairwise relational structures~\cite{memoli2011gw}, and extending to weighted networks~\cite{chowdhury2019networkgw}, GW alignment offers a metric space for comparing arbitrary relational geometries. 
Furthermore, optimal transport couplings under GW distances yield meaningful node-level correspondences for cross-domain matching and representation transfer~\cite{xu2019gwl}.
These properties suggest that heterogeneous graphs can be effectively characterized by their relational distances and transport couplings relative to a finite set of shared reference bases.

\begin{figure}[t]
\centering
\includegraphics[width=\columnwidth]{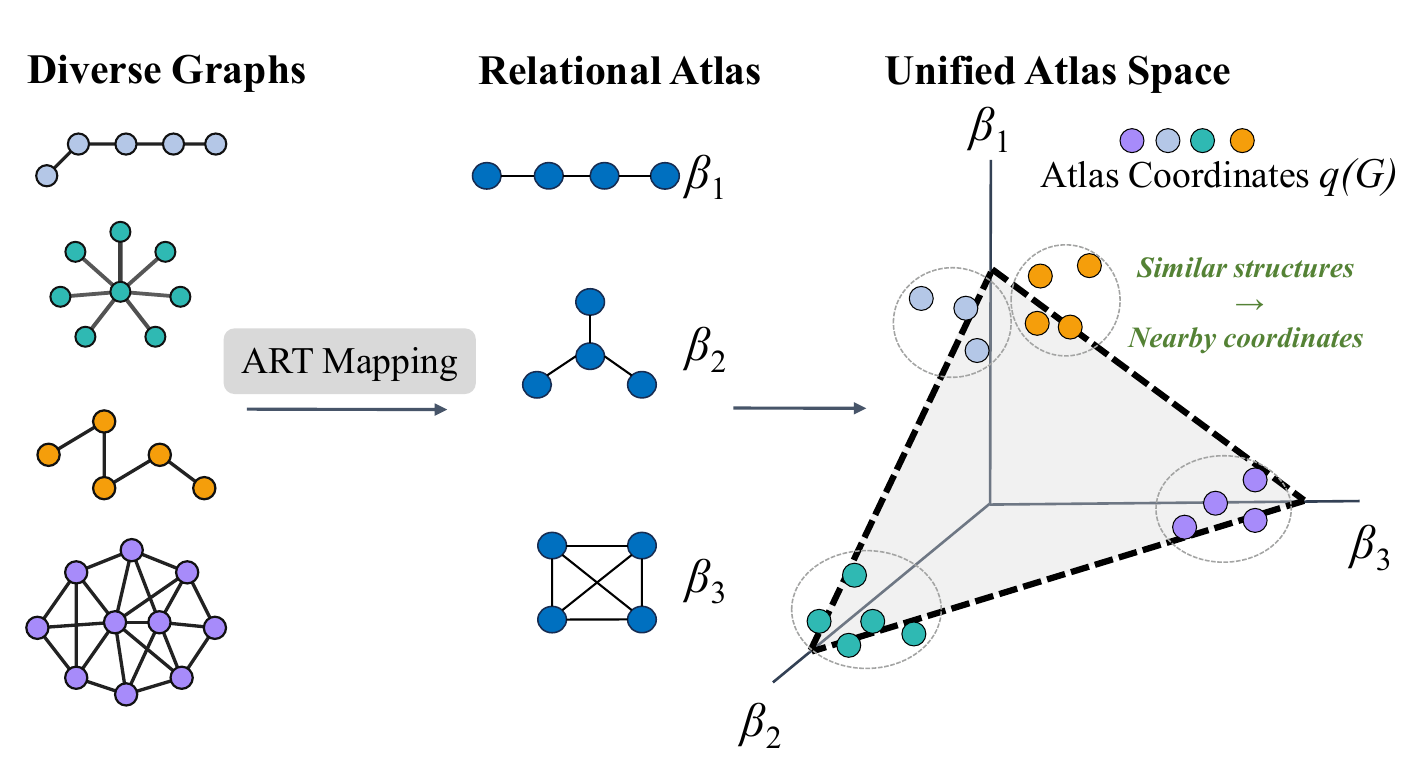}
\caption{
\textbf{Geometric intuition of SCGFM-ART.} Heterogeneous input graphs from diverse domains (left) with disparate scales and node semantics are mapped to a shared \textbf{Relational Atlas} ($\beta_1, \beta_2, \dots, \beta_K$) via \textbf{Amortized Relational Transport (ART)}. Graph--base relational discrepancies induce global response coordinates $q(G)$ in a canonical structural reference space (right), positioning topologically similar graphs nearby regardless of original domain or size.
}
\label{fig:motivation}
\end{figure}

As illustrated in Fig.~\ref{fig:motivation}, we organize these shared references into a \emph{relational atlas}. 
Under this construction, each graph is represented \emph{globally} by its relational discrepancies to the atlas bases—forming its global atlas coordinate $q(G)$—and \emph{locally} by the corresponding graph--base transport couplings, which align graph nodes with shared relational roles. 
This yields a dual global--local interface that seamlessly addresses structural and feature heterogeneity.

Building upon this formulation, we introduce \textbf{SCGFM-ART}, a structure-centric graph foundation model anchored by a shared relational atlas. 
The atlas provides a finite system of topological landmarks, while Amortized Relational Transport (ART) learns an end-to-end mapping from input graphs directly to these landmarks.
For any graph, ART produces both a global relational response coordinate and a fine-grained node-to-role structural mapping. 
The global coordinates project graphs into a canonical reference frame regardless of scale or node identity, whereas the local transport couplings map domain-specific node attributes into a canonical role space, resolving feature heterogeneity within the same alignment pipeline. 
By amortizing the optimal transport optimization via neural inference~\cite{amos2023metaot}, ART bypasses costly runtime iterations, producing instantaneous graph--atlas alignment for unseen target graphs.

We establish rigorous theoretical foundations for SCGFM-ART by modeling graphs and atlas bases as finite measured relational spaces. 
Assuming total boundedness of the task support, we prove that a finite relational atlas establishes a well-defined coordinate system with geometric distortion bounded by the atlas covering radius. 
We further quantify the excess error introduced by neural transport amortization and derive an amortized coverage bound, proving that our practical pretraining objective serves as a provably sound surrogate for ideal relational atlas coverage.

Our main contributions are summarized as follows:
\begin{enumerate}
    \item \textbf{A Unified Dual-Level Framework:} We introduce \textbf{SCGFM-ART}, which maps heterogeneous graphs onto a shared relational atlas. Through ART, it unifies global relational coordinates and local node-to-role correspondences into a single interface for structural and semantic alignment.
    \item \textbf{Theoretical Foundations:} We establish formal guarantees for finite relational atlas learning, establishing coordinate fidelity under finite atlas coverage, quantifying amortization prediction error, and proving that our trainable coverage objective tightly bounds ideal relational alignment.
    \item \textbf{Extensive Empirical Validation:} Across 14 cross-domain graph- and node-level benchmarks, SCGFM-ART demonstrates state-of-the-art transferability. Comprehensive empirical studies validate component efficacy, atlas capacity, structural perturbation sensitivity, and computational scalability.
\end{enumerate}

\textbf{Relationship to Prior Conference Work.} A preliminary version of this concept appeared as SCGFM~\cite{he2026scgfm}, which introduced learnable geometric bases as a shared structural reference system. However, SCGFM lacked a predictive mapping path, requiring computationally expensive, instance-wise Gromov--Wasserstein optimization for every unseen target graph during inference. This work overcomes this bottleneck by proposing \textbf{ART}, which directly predicts reusable graph--base transport plans. By eliminating iterative target-time alignment, ART accelerates frozen-inference throughput by $44.2\times\text{--} 85.1\times$ over SCGFM on real-world benchmarks. Furthermore, we establish a comprehensive relational-atlas theoretical framework and substantially expand our empirical evaluations with additional baselines, topological perturbation experiments, and scalability analyses.

\section{Related Work}

\subsection{Self-Supervised Graph Pre-training}

Self-supervised graph learning exploits unlabeled graph data to learn transferable representations. Contrastive methods such as DGI~\cite{velickovic2019dgi} and GraphCL~\cite{you2020graphcl} maximize agreement between related graph views or local--global representations, while masked modeling methods such as GraphMAE~\cite{hou2022graphmae} reconstruct corrupted node attributes.
FUG~\cite{zhao2024fug} extends contrastive pre-training across datasets
by accommodating node features of different dimensionalities without
rebuilding the graph encoder.
These approaches have established effective pre-training paradigms for graph representation learning. Graph foundation models further extend this objective from individual datasets to heterogeneous graph domains, where transferable representations must accommodate substantial variations in topology and node attributes.

\subsection{Graph Foundation Models}
GFMs aim to acquire reusable knowledge across graph domains and downstream tasks~\cite{liu2025gfm}.
Recent approaches explore different mechanisms for establishing shared representations.
GIT~\cite{wang2025git} unifies node-, edge-, and graph-level tasks through task-trees, while RiemannGFM~\cite{sun2025riemanngfm} constructs a structural vocabulary of trees and cycles in Riemannian spaces.
For multi-domain transfer, SAMGPT~\cite{yu2025samgpt} introduces structure tokens and prompting, MDGFM~\cite{wang2025mdgfm} aligns cross-domain topologies, and BRIDGE~\cite{yuan2025bridge} learns domain-invariant representations with selective knowledge transfer.
For robust downstream adaptation, GRAVER~\cite{yuan2025graver} constructs generative graph vocabularies from transferable subgraph patterns, while RAG-GFM~\cite{yuan2026raggfm} externalizes semantic andstructural knowledge through retrieval-augmented generation.
Complementing these multi-domain alignment strategies, transfer-invariant feature modeling~\cite{zhao2026transferinvariant} maps node attributes
with heterogeneous dimensions into a shared structural space, providing a unified feature interface for general graphs.
SCGFM~\cite{he2026scgfm} introduced learnable geometric bases and represented heterogeneous graphs by their GW discrepancies to a shared structural reference system.
Building on this structure-centric view, the present work learns a relational atlas together with graph-to-base correspondences, enabling each base to serve simultaneously as a global structural landmark and a local relational reference.

\subsection{Gromov--Wasserstein Learning and Amortized Transport}

Optimal transport (OT) compares probability measures through transport
couplings, with entropic regularization enabling efficient Sinkhorn
optimization~\cite{cuturi2013sinkhorn}. 
GW distance extends this principle to objects described by intrinsic
pairwise relations~\cite{memoli2011gw}, and has been generalized to
weighted networks~\cite{chowdhury2019networkgw}.
GW-based methods have been applied to structured graph comparison through Fused
GW~\cite{vayer2019fgw} and to joint graph matching and node embedding~\cite{xu2019gwl}. These formulations typically optimize transport separately for individual problem instances.
Meta OT~\cite{amos2023metaot} instead learns to amortize repeated OT problems across input measures. ART adopts this amortization principle for relational graph matching: it predicts feasible graph-to-base couplings whose relational energies define atlas coordinates, while the same couplings support downstream feature recoding.

\section{Preliminaries}
\textbf{Problem Setup.} Let $G \sim \mathbb P$ denote a graph drawn from the task domain. Each graph provides a finite node set $V$, an internal pairwise relation, and optional node attributes $X \in \mathbb{R}^{N \times d_f}$. Throughout, $N =\vert V \vert$ denotes graph size, and $\beta_k$ denotes the $k$-th relational geometric base, where $K$ bases and $M$ roles per base. We use $B_k$ for the relation matrix of $\beta_k$, a superscript $*$ for an exact relational optimum, and a hat for an amortized prediction.

\subsection{Relational problem formulation}
\begin{definition}[Finite measured relational structure] 
\label{def:relation_structure}
A finite measured relational structure is a triplet $G = (V, A, \mu_G)$, where $A: V \times V \rightarrow{[0,1]}$ is a symmetric relation satisfying $A(i,i) =0$, and $\mu_G$ is a probability measure on $V$. Finite metric-measure spaces arise when $A$ additionally satisfies the triangle inequality; normalized adjacency and kernel relations are included without requiring that condition.
\end{definition}
For graph $G$, we use the degree-smoothed measure
\begin{equation}
    \mu_G (i) = \frac{\mathrm{deg}(i)+ \epsilon_u}{\sum_{j \in V} [\mathrm{deg}(j) +\epsilon_u]},
\end{equation}
where $\mathrm{deg}(i)$ denotes the degree of node $i \in V$, and $\epsilon_u$ is a small positive constant for numerical stability.
Each relational geometric base uses the uniform measure
\begin{equation}
\beta_k=([M],B_k,\nu),
\qquad
[M]:=\{1,\ldots,M\},
\end{equation}
where $B_k\in [0,1]^{M\times M}$ is a symmetric relation matrix satisfying $B_k(a,a)=0$, and $\nu_a =\frac{1}{M}$, $a\in [M]$.
$G$ and $\beta_k$ may have different cardinalities; their comparison depends only on their internal relations and probability measures. For two measured relational structures $G=(V,A,\mu_G)$ and $G' = (V',A',\mu_{G'})$, let 
\begin{equation}
    \Pi(\mu_G,\mu_{G'}) = \{  T\in \mathbb R_+^{\vert V \vert \times \vert V'\vert }: T\boldsymbol{1}=\mu_G, T^{\top}\boldsymbol{1}=\mu_{G'}\}
\end{equation}
be the set of coupling with prescribed marginals. For any $T\in \Pi(\mu_G,\mu_{G'})$, define its squared relational distortion as 
\begin{equation}
\label{eq:relational_distortion}
    \mathcal E(A,A',T) = \sum_{i,j,m,n}(A_{ij} -A'_{m,n})^2 T_{im}T_{jn}.
\end{equation}
The corresponding relational GW discrepancy is
\begin{equation}
\label{eq:gw_discrepancy}
    d(G,G') = \sqrt{\min_{T\in\Pi(\mu_G,\mu_{G'})}\mathcal E(A,A';T)}.
\end{equation}

\begin{proposition}[Relational graph geometry]
Let $d$ be the relational GW discrepancy defined in Eq.\eqref{eq:gw_discrepancy}.
Then $d$ is finite and invariant under node relabeling. Modulo zero-discrepancy equivalence, it defines a metric on $\mathcal X$, which contains both input graphs and relational bases.
\end{proposition}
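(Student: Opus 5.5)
The plan is to follow the standard Gromov--Wasserstein theory for measure networks (Chowdhury--Mémoli) and specialize it to the relational structures of Definition~\ref{def:relation_structure}, in four steps: finiteness, relabeling invariance, the metric axioms, and the quotient.

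\textbf{Finiteness.} $\Pi(\mu_G,\mu_{G'})$ is nonempty, since the product coupling $\mu_G\otimes\mu_{G'}$ lies in it. The set is also compact, being a closed bounded polytope in $\mathbb R^{|V|\times|V'|}$. The map $T\mapsto\mathcal E(A,A';T)$ is a quadratic polynomial, hence continuous, so the minimum in Eq.~\eqref{eq:gw_discrepancy} is attained. Because all relations take values in $[0,1]$, every coefficient $(A_{ij}-A'_{mn})^2$ is at most $1$. Hence $\mathcal E\le(\sum_{i,m}T_{im})^2=1$, which gives $0\le d(G,G')\le 1$.

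\textbf{Relabeling invariance.} Let $\sigma$ be a bijection of $V$, acting by $A^\sigma_{ij}=A_{\sigma(i)\sigma(j)}$ and $\mu^\sigma(i)=\mu(\sigma(i))$. The map $T\mapsto T^\sigma$ with $T^\sigma_{im}=T_{\sigma(i)m}$ is a bijection from $\Pi(\mu_G,\mu_{G'})$ onto $\Pi(\mu^\sigma,\mu_{G'})$. Reindexing the sum in Eq.~\eqref{eq:relational_distortion} shows $\mathcal E(A^\sigma,A';T^\sigma)=\mathcal E(A,A';T)$, so the minima agree. The same argument applies on the $G'$ side.

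\textbf{Metric axioms.} Symmetry holds because transposing $T$ swaps the marginals and leaves $\mathcal E$ unchanged. For $d(G,G)=0$, use the diagonal coupling $T=\mathrm{diag}(\mu_G)$: every nonzero term has $m=i$ and $n=j$, so its coefficient is $(A_{ij}-A_{ij})^2=0$.

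The main work is the triangle inequality, which I would prove by gluing. Take optimal couplings $T^{1}\in\Pi(\mu_G,\mu_{G'})$ and $T^{2}\in\Pi(\mu_{G'},\mu_{G''})$. Assume without loss of generality that $\mu_{G'}>0$ everywhere, which holds for the degree-smoothed and uniform measures used here. Define the glued measure $P_{imr}=T^1_{im}T^2_{mr}/\mu_{G'}(m)$ on $V\times V'\times V''$. Its $(V,V'')$-marginal $T^3$ lies in $\Pi(\mu_G,\mu_{G''})$, and its pairwise marginals recover $T^1$ and $T^2$. Then
\begin{equation*}
d(G,G'')\le\Big(\sum (A_{ij}-A''_{rs})^2 P_{imr}P_{jns}\Big)^{1/2},
\end{equation*}
where the sum runs over $i,j\in V$, $m,n\in V'$ and $r,s\in V''$. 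Write $A_{ij}-A''_{rs}=(A_{ij}-A'_{mn})+(A'_{mn}-A''_{rs})$. Minkowski's inequality in $L^2(P\otimes P)$ bounds the right-hand side by the two separate $L^2$ norms. Each norm depends only on the relevant pairwise marginal of $P$, so they equal $d(G,G')$ and $d(G',G'')$.

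\textbf{Quotient.} Nonnegativity is immediate. Declare $G\sim G'$ when $d(G,G')=0$. The triangle inequality shows that $\sim$ is an equivalence relation and that $d$ is constant on equivalence classes, so $d$ descends to a genuine metric on $\mathcal X/\!\sim$. It is not necessary to characterize zero discrepancy as a strong isomorphism, since the statement is made only modulo zero-discrepancy equivalence.

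The only delicate point is the gluing step, specifically checking the marginals of $P$ and handling zero-mass points of $\mu_{G'}$ if general measures are allowed. In that case I would define $P_{imr}=0$ whenever $\mu_{G'}(m)=0$; this is consistent because the corresponding rows of $T^1$ and $T^2$ vanish, so the marginal checks still go through.
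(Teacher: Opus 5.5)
Your proof is correct and has the same skeleton as the paper's. Both use the bound $0\le\mathcal E\le(\sum_{i,m}T_{im})^2=1$, symmetry by transposing the coupling, and relabeling invariance through an energy-preserving bijection of transport polytopes. Both finish with the quotient step, where the triangle inequality makes $d$ well defined on zero-discrepancy classes.

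The one real difference is the triangle inequality. The paper does not prove it; it cites the order-$2$ network Gromov--Wasserstein result of Chowdhury--M\'emoli, which also identifies the zero set of $d$ with weak-isomorphism classes. You prove it directly by gluing $T^1$ and $T^2$ through $\mu_{G'}$ and applying Minkowski in $L^2(P\otimes P)$. That is the standard argument behind the cited result, so your version is a self-contained rendering of the same proof rather than a different strategy.

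Your route removes the dependence on the external theorem. It also makes explicit several points the paper leaves implicit: that the minimum is attained (by compactness), that $d(G,G)=0$ (via the diagonal coupling), and how to handle zero-mass points in the gluing. The citation gives the paper the sharper characterization of the zero set. As you note, that characterization is not needed for the statement as phrased.

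One small slip: when $\mu_{G'}(m)=0$, what vanishes is the $m$-th \emph{column} of $T^1\in\mathbb R^{|V|\times|V'|}$ and the $m$-th row of $T^2$, not two rows. The marginal checks go through unchanged.
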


\subsection{Task-domain premise and finite atlas}
\begin{assumption}[Total boundedness of the task support]
Let $\mathcal X_0=\operatorname{supp}(\mathbb P)\subseteq \mathcal X$ be the support of the task graph distribution, equipped with the relational GW metric $d$ defined in Eq.\eqref{eq:gw_discrepancy}.
We assume that $(\mathcal X_0,d)$ is totally bounded. Equivalently, at every resolution, $\mathcal X_0$ can be covered by finitely many relational neighborhoods.
\end{assumption}

\begin{definition}[Finite relational atlas]
\label{def:atlas}
    A K-base relational atlas is a finite collection
    \begin{equation}
        \mathcal A =\{ \beta_1, \ldots,\beta_K\} \subseteq \mathcal{X}.
    \end{equation}
    Its covering radius over $\mathcal{X}_0$ and its induced ideal coordinate are $\epsilon_\mathcal{A} := \sup_{G \in \mathcal X_0} \min_{1 \leq k \leq K} d(G, \beta_k),$ $ q^*_{\mathcal A}(G):= \left(d(G,\beta_k)\right)_{k=1}^K$.
    We write $q^*$ when $\mathcal A$ is clear from context.
\end{definition}
\begin{theorem}[Finite-atlas coordinate fidelity]
    \label{thm:atlas_fidelity}
    Let $\mathcal A$ be any finite atlas with covering radius $\epsilon_\mathcal{A}$ over $\mathcal X_0$. Then, for any $G,G' \in \mathcal X_0$,
    \begin{equation}
        d(G,G')-2\epsilon_\mathcal{A} \leq \Vert q^*_{\mathcal A}(G) - q^*_{\mathcal A}(G') \Vert_{\infty} \leq d(G,G')
    \end{equation}
\end{theorem}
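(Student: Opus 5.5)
The plan is to use only the metric structure of $d$ from the Relational graph geometry proposition, namely symmetry and the triangle inequality, modulo zero-discrepancy equivalence. The statement is the standard landmark-embedding (Fréchet-type) estimate, and each side follows from one triangle-inequality argument.

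\textbf{Upper bound.} Fix $k\in\{1,\dots,K\}$. The triangle inequality gives $d(G,\beta_k)\le d(G,G')+d(G',\beta_k)$. Exchanging the roles of $G$ and $G'$ and using symmetry gives the reverse inequality. Together these yield $|d(G,\beta_k)-d(G',\beta_k)|\le d(G,G')$. Taking the maximum over $k$ gives $\Vert q^*_{\mathcal A}(G)-q^*_{\mathcal A}(G')\Vert_\infty\le d(G,G')$. This is the $1$-Lipschitz property of distance functions, and it holds for every $G,G'\in\mathcal X$, not only on $\mathcal X_0$.

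\textbf{Lower bound.} Here we use $G\in\mathcal X_0$. Because $\mathcal A$ is finite, the minimum $\min_k d(G,\beta_k)$ is attained, so we can choose $k^\star$ with $d(G,\beta_{k^\star})\le\epsilon_{\mathcal A}$. This follows directly from the definition of the covering radius as a supremum of minima. The triangle inequality then gives
\begin{equation*}
d(G,G')\le d(G,\beta_{k^\star})+d(\beta_{k^\star},G')\le 2\,d(G,\beta_{k^\star})+\bigl(d(G',\beta_{k^\star})-d(G,\beta_{k^\star})\bigr).
\end{equation*}
The first term is at most $2\epsilon_{\mathcal A}$. The second term is bounded by its absolute value, which is the $k^\star$-th coordinate difference and hence at most the $\ell_\infty$ norm. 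Rearranging gives $d(G,G')-2\epsilon_{\mathcal A}\le\Vert q^*_{\mathcal A}(G)-q^*_{\mathcal A}(G')\Vert_\infty$.

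The only delicate point is whether $d$ really satisfies the triangle inequality for general relations $A$ that need not be metrics. The GW triangle inequality comes from gluing couplings, and that argument uses only the $L^2$ structure of the distortion $(A_{ij}-A'_{mn})^2$, not any metric property of $A$. So I would invoke the Relational graph geometry proposition directly rather than re-prove it. Zero-discrepancy equivalence causes no trouble, since all quantities are values of $d$ and are therefore constant on equivalence classes. No compactness beyond finiteness of $\mathcal A$ is needed, and total boundedness serves only to guarantee that $\epsilon_{\mathcal A}$ can be made small; it is not used in the inequality itself.
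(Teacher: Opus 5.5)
Your proposal is correct and follows essentially the same argument as the paper. The upper bound is the coordinatewise $1$-Lipschitz estimate from the triangle inequality. The lower bound picks the nearest atlas base $\beta_r$ of $G$, uses $d(G,\beta_r)\le\epsilon_{\mathcal A}$ and one more triangle inequality, and then bounds that single coordinate difference by the $\ell_\infty$ norm.
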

Theorem~\ref{thm:atlas_fidelity} establishes the geometric role of a
finite relational atlas. The ideal coordinate $q_{\mathcal A}^*(G)$
is non-expansive with respect to the relational discrepancy, while its
pairwise information loss is controlled by the realized atlas radius
$\epsilon_{\mathcal A}$.
Thus, a sufficiently representative finite atlas
provides a shared structural coordinate system for heterogeneous graphs.
All proofs and additional theoretical details are provided in the supplementary material.

The above analysis considers ideal coordinates derived from exact relational matching. In practice, however, solving a separate coupling optimization for every graph–base pair is computationally prohibitive. SCGFM-ART therefore adopts an amortized coupling predictor to approximate the optimal alignment efficiently.

\section{Method}
\begin{figure*}[!t]
\centering
\includegraphics[width=\linewidth]{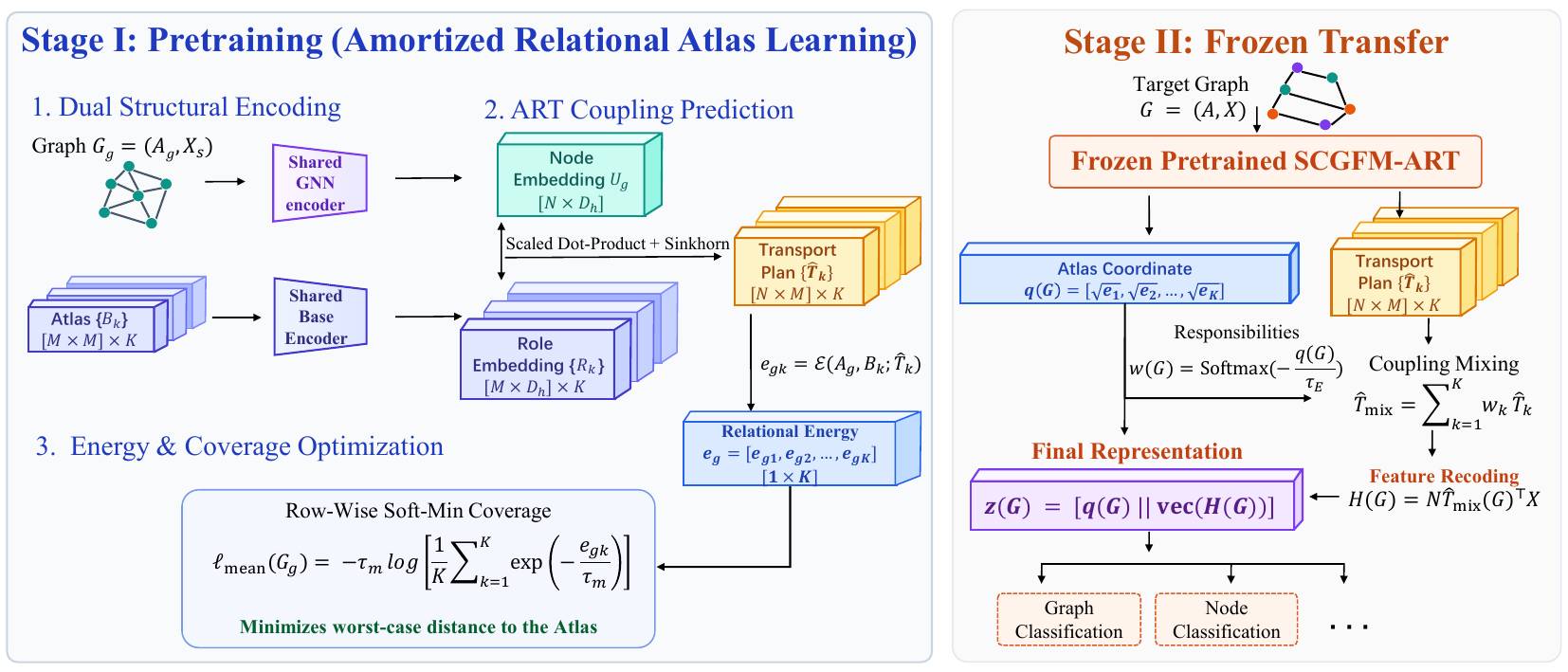}
\caption{Overview of SCGFM-ART.
\textbf{Stage I: Pretraining}. Given an input graph, a structure-only GNN encodes normalized degree features $X_s$, while a shared base encoder embeds the learnable relational atlas.
For each graph–base pair, ART predicts a feasible coupling through scaled dot-product and Sinkhorn projection, and evaluates the corresponding relational energy.
The resulting graph–base energy matrix is optimized by row-wise soft-min mean coverage to jointly learn the atlas and ART.
\textbf{Stage II: Frozen Transfer}.
The pretrained model directly produces target couplings and energies. Energy-derived responsibilities are used to mix graph–base couplings, while the square-root energies form the atlas coordinate.
Node attributes are transported to the shared base-role space via $H(G)=N\widehat T_{\mathrm{mix}}^{\top}X$, and the final representation $(z(G)=[q(G)\Vert\operatorname{vec}(H(G))])$ is used for downstream graph- and node-level tasks.}
\label{fig:framework}
\end{figure*}

\subsection{Overview}
We propose \textbf{SCGFM-ART}, a Structure-Centric Graph Foundation Model based on Amortized Relational Transport. As illustrated in Fig.~\ref{fig:framework}, SCGFM-ART learns shared relational atlas from unlabeled graph structures, predicts graph-to-base couplings via a ART module, and constructs transferable representations from the resulting relational coordinates and transport-conditioned features.

\subsection{Amortized Relational Transport}
The ART module amortizes relational matching across graphs and bases.
It takes structure-only graph embeddings and learned base-role embeddings as input, predicts all graph-to-base couplings through shared compatibility scores and Sinkhorn projection, and converts the resulting relational energies into atlas coordinates.
This feed-forward construction avoids per-sample GW outer optimization while preserving the prescribed graph and base marginals.
For observed graphs, $A$ is the loop-free binary adjacency relation and is stored as a coalesced edge list $E$ throughout the ART computation.

\textbf{Structure-only Graph Encoder.}
For a graph $G=(V,A,\mu_{G})$, SCGFM-ART adopts normalized node degree as the only structural node-wise feature:
\begin{equation}
    X_{s}(i) = \frac{\mathrm{deg}(i)}{\max_{j \in V} \mathrm{deg}(j)\vee 1}, \quad X_s \in \mathbb R^{N}
\end{equation}
A two-layer GIN~\cite{xu2019gin} shared across graph domains maps $X_s$ to node embeddings:
\begin{equation}
    U = \mathrm{GIN}(A,X_{s}) \in \mathbb R^{N \times D_h}
\end{equation}

SCGFM-ART instantiates the finite atlas in Definition \ref{def:atlas} with $K$ learnable kernel relational geometric bases.
For the $k$-th base, let $Z_k \in \mathbb R^{M\times M}$ be an unconstrained trainable parameter matrix.
Its entries are independently initialized from the standard normal distribution, $Z_k(a,b)\sim\mathcal N(0,1)$.
Let $\sigma(t)=(1+\exp(-t))^{-1}$ denote the element-wise sigmoid function, while
$\operatorname{Hollow}(S)$ sets the diagonal of $S$ to zero and leaves its
off-diagonal entries unchanged.
The relation matrix of the $k$-th base is parameterized as
\begin{equation}
    B_k=\operatorname{Hollow}\left[\sigma\left(\frac{Z_k+Z_k^\top}{2}\right)\right].
\end{equation}
Symmetrization, element-wise sigmoid, and diagonal removal make $B_k$ a
bounded, symmetric, hollow relation. Each base uses the uniform measure
$\nu$ specified in Definition \ref{def:relation_structure}.
Each $B_k$ is represented on the fixed index set $[M]$.
Its $a$-th row, $B_k(a,:)$, records the relations between base node $a$ and all
base nodes. A shared MLP is applied row-wise to every base matrix:
\begin{equation}
    R_k(a,:)=\operatorname{BaseEnc}\bigl(B_k(a,:)\bigr),\quad R_k\in\mathbb R^{M\times D_h}.
\end{equation}

Given the graph-node embeddings $U$ and the base-role embeddings $R_k$, we construct the graph-to-base compatibility logits using scaled dot-product similarity
\begin{equation}
    L_k =\frac{(UW_G)(R_kW_B)^{\top}}{\sqrt{D_h}} \in \mathbb{R}^{N\times M}
\end{equation}
where $W_G, W_B \in \mathbb R^{D_h \times D_h}$ are learnable linear projections shared across all graph--base pairs.
The proposal logits are subsequently projected onto the transport polytope $\Pi(\mu_G,\nu)$ using log-domain Sinkhorn normalization:
\begin{equation}
\widehat T_k=\operatorname{Sinkhorn}(L_k;\mu_G,\nu) \in \Pi(\mu_G,\nu).
\label{eq:art_sinkhorn}
\end{equation}
ART employs a single cross-transport stage. The shared encoders provide the two structural representations, scaled dot-product compatibility constructs the pairwise proposal, and Sinkhorn enforces the prescribed marginals.

\begin{lemma}[Feasibility and permutation equivariance of ART]
\label{lem:art_equivariance}
Assume that the proposal kernel and both marginals are strictly positive.
Then the Sinkhorn limit $\widehat T_k$ satisfies
\begin{equation}
\widehat T_k\mathbf{1}_M=\mu_G,
\qquad
\widehat T_k^\top\mathbf{1}_N=\nu.
\end{equation}
Moreover, for any node permutation matrix $P \in \{0,1\}^{N\times N}$,
\begin{equation}
\widehat T_k(PAP^\top,P\mu_G)=P\widehat T_k(A,\mu_G).
\end{equation}
\end{lemma}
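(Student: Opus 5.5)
The plan is to split the lemma into two independent claims: feasibility, which is a statement about Sinkhorn scaling for a fixed positive kernel, and equivariance, which follows by tracing how a node permutation propagates through the ART pipeline.

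\textbf{Feasibility.} Write the proposal kernel as $K_k=\exp(L_k)\in\mathbb R_{>0}^{N\times M}$. Log-domain Sinkhorn computes scalings $u\in\mathbb R_{>0}^N$ and $v\in\mathbb R_{>0}^M$ by alternating the updates $u\leftarrow \mu_G/(K_k v)$ and $v\leftarrow \nu/(K_k^\top u)$. The output is $\widehat T_k=\operatorname{diag}(u)K_k\operatorname{diag}(v)$.

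I will invoke the Sinkhorn--Knopp theorem for strictly positive matrices, in the rectangular form with positive marginals of equal total mass (both $\mu_G$ and $\nu$ sum to $1$). It gives existence of a unique positive diagonal scaling, and convergence of the alternating iterates to it. To keep the argument self-contained, I would cite the Hilbert projective metric contraction argument (Birkhoff): each half-step is a strict contraction because $K_k$ has positive entries.

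At the limit, the fixed-point equations read $u\odot(K_kv)=\mu_G$ and $v\odot(K_k^\top u)=\nu$. These are exactly $\widehat T_k\mathbf 1_M=\mu_G$ and $\widehat T_k^\top\mathbf 1_N=\nu$. Nonnegativity is immediate, so $\widehat T_k\in\Pi(\mu_G,\nu)$.

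\textbf{Equivariance.} I trace the permutation $P$ through the pipeline in four steps.

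\begin{enumerate}
\item The degree vector transforms as $\deg\mapsto P\deg$. The normalizer $\max_j \deg(j)\vee 1$ is permutation invariant, so $X_s\mapsto PX_s$. The same reasoning gives $\mu_G\mapsto P\mu_G$, consistent with the hypothesis of the lemma.
\item GIN layers use sum aggregation, an $\mathrm{MLP}$ applied row-wise, and the self term. For each layer, $\mathrm{GIN}(PAP^\top,PX)=P\,\mathrm{GIN}(A,X)$, since $(PAP^\top)(PH)=PAH$ and row-wise maps commute with row permutations. Induction over the two layers gives $U\mapsto PU$.
\item The base embeddings $R_k$ do not depend on $G$. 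Hence $L_k\mapsto PL_k$ and $K_k\mapsto PK_k$, using the entrywise exponential and the fact that $P$ is a permutation.
\item For Sinkhorn, I show by induction over iterations that, when started from the same initialization $v^{(0)}=\mathbf 1$, the iterates for $(PK_k,P\mu_G,\nu)$ are $(Pu^{(t)},v^{(t)})$. This uses $(P\mu_G)/(PK_kv)=P(\mu_G/(K_kv))$ and $(PK_k)^\top(Pu)=K_k^\top u$. Passing to the limit yields $\widehat T_k(PAP^\top,P\mu_G)=\operatorname{diag}(Pu)PK_k\operatorname{diag}(v)=P\widehat T_k(A,\mu_G)$.
\end{enumerate}

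As an alternative to the induction over iterates, uniqueness of the scaled matrix from the feasibility step suffices: $P\widehat T_k$ is a positive diagonal scaling of $PK_k$ with the right marginals, so it must equal the limit.

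\textbf{Main obstacle.} The delicate point is the convergence and uniqueness claim in the rectangular, non-uniform-marginal case. The scaling vectors are unique only up to the gauge $(cu,v/c)$, while the matrix $\widehat T_k$ itself is unique. I will phrase everything in terms of $\widehat T_k$ so that the gauge freedom never matters.

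A minor point is that the edge-list implementation computes the same $A$-products, so nothing changes there.
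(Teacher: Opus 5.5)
Your proposal is correct and takes essentially the paper's approach: feasibility comes from the classical Sinkhorn scaling theorem for a strictly positive kernel with positive equal-mass marginals, and equivariance comes from propagating $P$ through $X_s$, the GIN encoder, the logits and kernel, and the Sinkhorn scalings, where the graph-side vector is permuted and the base-side vector is unchanged. Your explicit induction over iterates and your uniqueness-based alternative spell out what the paper states in one line; the induction also gives equivariance for the fixed-iteration implementation.
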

Therefore, the resulting graph–base relational energy is invariant under node permutations.

For any feasible coupling $T$, the relational energy admits the compact form
\begin{equation}
\mathcal E(A,B;T)
=
\sum_{i,j} A_{ij}^{2}\mu_i\mu_j
+
\sum_{a,b} B_{ab}^{2}\nu_a\nu_b
-
2\langle AT,TB\rangle_F.
\label{eq:fast_energy}
\end{equation}
Eq.~\eqref{eq:fast_energy} is algebraically equivalent to the squared relational distortion defined in Eq.~\eqref{eq:relational_distortion}, replacing the explicit four-index summation with matrix operations.
For the sparse binary input relation, the graph-side terms can be evaluated exactly through edge-wise reductions, without materializing a dense $N\times N$ relation matrix.

ART replaces the exact graph-to-base optimization with the amortized
coupling $\widehat T_k$. For the $k$-th base, we define
\begin{align}
e_k^*(G)
&:=
\min_{T\in\Pi(\mu_G,\nu)}\mathcal E(A,B_k;T)= d^2(G,\beta_k),
\\
e_k(G)
&:=\mathcal E(A,B_k;\widehat T_k)\ge e_k^*(G),\qquad q_k(G):=\sqrt{e_k(G)}.
\end{align}
Since $\sqrt{e_k^*(G)}=d(G,\beta_k)$, the exact energies recover the
ideal atlas coordinate $q_{\mathcal A}^*(G)$ defined in
Definition~\ref{def:atlas}.
Let $q(G):=(q_k(G))_{k=1}^K$, $\eta(G):=\left\|q(G)-q_{\mathcal A}^*(G)\right\|_\infty$, where $\eta(G)$ measures the worst-case discrepancy between the amortized and ideal coordinates over the relational atlas.

\begin{corollary}[Amortized Atlas Coordinate Fidelity]
\label{cor:amortized_coordinate_fidelity}
Let $\mathcal A$ be a finite relational atlas with covering radius
$\epsilon_{\mathcal A}$ over $\mathcal X_0$. For the ART coordinate
$q(G)$ defined above, any $G,G'\in\mathcal X_0$ satisfy
\begin{equation}
\begin{aligned}
d(G,G')
-2\epsilon_{\mathcal A}
-\eta(G)-\eta(G')
&\le
\left\|q(G)-q(G')\right\|_\infty
\\
&\le
d(G,G')
+\eta(G)+\eta(G').
\end{aligned}
\label{eq:amortized_coordinate_bound}
\end{equation}
\end{corollary}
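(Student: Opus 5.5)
The plan is to derive Corollary~\ref{cor:amortized_coordinate_fidelity} directly from Theorem~\ref{thm:atlas_fidelity} by a triangle-inequality perturbation argument in the $\ell_\infty$ norm on $\mathbb R^K$. No new property of the relational GW discrepancy is needed. Everything about $d$ is already packaged in the two-sided bound of Theorem~\ref{thm:atlas_fidelity}. The only new ingredient is the definition $\eta(G)=\|q(G)-q^*_{\mathcal A}(G)\|_\infty$, which is finite because every $e_k(G)$ is a finite sum of bounded terms.

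First I will write the decomposition
\begin{equation*}
q(G)-q(G') = \bigl(q(G)-q^*_{\mathcal A}(G)\bigr) + \bigl(q^*_{\mathcal A}(G)-q^*_{\mathcal A}(G')\bigr) + \bigl(q^*_{\mathcal A}(G')-q(G')\bigr).
\end{equation*}
Applying the triangle inequality for $\|\cdot\|_\infty$ gives
\begin{equation*}
\|q(G)-q(G')\|_\infty \le \eta(G)+\|q^*_{\mathcal A}(G)-q^*_{\mathcal A}(G')\|_\infty+\eta(G').
\end{equation*}
Inserting the upper bound of Theorem~\ref{thm:atlas_fidelity}, namely $\|q^*_{\mathcal A}(G)-q^*_{\mathcal A}(G')\|_\infty\le d(G,G')$, yields the right-hand inequality of Eq.~\eqref{eq:amortized_coordinate_bound}.

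Second, for the lower bound I will use the reverse triangle inequality on the same decomposition:
\begin{equation*}
\|q(G)-q(G')\|_\infty \ge \|q^*_{\mathcal A}(G)-q^*_{\mathcal A}(G')\|_\infty-\eta(G)-\eta(G').
\end{equation*}
The lower bound of Theorem~\ref{thm:atlas_fidelity} gives $\|q^*_{\mathcal A}(G)-q^*_{\mathcal A}(G')\|_\infty\ge d(G,G')-2\epsilon_{\mathcal A}$, which produces the left-hand inequality. This direction needs $G,G'\in\mathcal X_0$, since the covering radius is a supremum over $\mathcal X_0$, and that is exactly the hypothesis of the corollary.

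There is no real obstacle. The one point to check is that Theorem~\ref{thm:atlas_fidelity} is applied with the same atlas $\mathcal A$ and the same ideal coordinate $q^*_{\mathcal A}$ against which $\eta$ is measured. This holds because $\sqrt{e_k^*(G)}=d(G,\beta_k)$, as noted before the statement.

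It may be worth remarking that $e_k\ge e_k^*$ makes each coordinate error one-signed, $q_k(G)\ge q^*_k(G)$. However, the argument does not use this fact, so the bound holds for any coordinate perturbation measured by $\eta$.
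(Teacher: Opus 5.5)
Your proposal is correct and takes the same route as the paper. You decompose $q(G)-q(G')$ into three terms around the ideal coordinates $q^*_{\mathcal A}$. The right-hand inequality then follows from the triangle inequality and the upper bound of Theorem~\ref{thm:atlas_fidelity}, and the left-hand inequality from the reverse triangle inequality and the lower bound of that theorem.
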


Corollary~\ref{cor:amortized_coordinate_fidelity} separates two sources of distortion in the practical atlas representation.
The term $\epsilon_{\mathcal A}$ reflects the finite resolution of the learned atlas, whereas $\eta(G)$ quantifies the additional coordinate error introduced by amortized transport.
Therefore, ART preserves the relational-coordinate fidelity of the ideal atlas up to the combined effects of atlas resolution and amortization error.

\subsection{Learning the Relational Atlas}
\label{sec:atlas_learning}
The preceding analysis characterizes the representation induced by a
given relational atlas and quantifies the additional distortion introduced
by amortized coupling prediction. It remains to determine how the atlas
and the ART predictor can be learned jointly from unlabeled graphs.
We address this problem through a relational coverage objective that
encourages each graph to admit at least one low-energy correspondence
to the shared atlas.

\textbf{Mean Relational Coverage}
\label{sec:mean_coverage}
For a graph $G_g$ in the current mini-batch and each relational base $\beta_k$, we define
$e_{gk}=\mathcal E(A_g,B_k;\widehat T_{gk}).$
The normalized soft-min coverage objective for $G_g$ is
\begin{equation}
\ell_{\mathrm{mean}}(G_g) =-\tau_m\log\left[\frac{1}{K}\sum_{k=1}^K\exp\left(-\frac{e_{gk}}{\tau_m}\right)\right],
\label{eq:mean_coverage_graph}
\end{equation}
where $\tau_m$ is a temperature parameter.
The corresponding mini-batch objective is
\begin{equation}
\mathcal L_{\mathrm{mean}}=\frac{1}{|\mathcal I_{\mathrm{bat}}|} \sum_{g\in\mathcal I_{\mathrm{bat}}} \ell_{\mathrm{mean}}(G_g),
\label{eq:mean_coverage_batch}
\end{equation}
where $\mathcal I_{\mathrm{bat}}$ is the set of graphs per mini-batch.
We optimize the atlas and ART parameters jointly using $\mathcal{L}_{\text{mean}}$.

The soft-min objective provides a differentiable approximation to nearest-base assignment while allowing all bases to receive gradients.
However, the energies used in Eq.~\eqref{eq:mean_coverage_graph} are evaluated at amortized couplings rather than exact relational optima.
We therefore analyze how closely the optimized objective reflects the ideal finite-atlas coverage.
\begin{theorem}[Amortized Coverage Bound]
\label{thm:amortized_coverage}
Define the ideal and amortized nearest-base energies as
\begin{equation}
m^*(G)=\min_k e_k^*(G),\qquad \widehat m(G)= \min_k e_k(G).
\label{eq:nearest_base_energy}
\end{equation}
Let $\mathcal K^*(G)=\arg\min_k e_k^*(G)$ denote the set of ideal nearest bases, and define the ART excess over these bases as
\begin{equation}
\Delta(G)=\min_{k\in\mathcal K^*(G)}\left[e_k(G)-e_k^*(G)\right]\ge 0.
\label{eq:art_excess}
\end{equation}
Because $\widehat T_k$ implies $e_k(G)\ge e_k^*(G)$, the normalized soft-min objective satisfies
\begin{equation}
m^*(G) \le \ell_{\mathrm{mean}}(G) \le m^*(G)+\Delta(G)+\tau_m\log K.
\label{eq:coverage_bound}
\end{equation}
\end{theorem}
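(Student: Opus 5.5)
The plan is to sandwich the normalized log-sum-exp between its smallest and largest terms, and then relate the amortized energies to the ideal ones using $e_k(G)\ge e_k^*(G)$ from feasibility of $\widehat T_k$ (Lemma~\ref{lem:art_equivariance}). Write $\ell:=\ell_{\mathrm{mean}}(G)$ and $S:=\frac1K\sum_{k=1}^K\exp(-e_k(G)/\tau_m)$, so that $\ell=-\tau_m\log S$ and $-\tau_m\log$ is decreasing.

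For the lower bound, each term satisfies $\exp(-e_k/\tau_m)\le \exp(-\widehat m/\tau_m)$. Hence $S\le \exp(-\widehat m/\tau_m)$, which gives $\ell\ge \widehat m(G)$. Since $\widehat T_k\in\Pi(\mu_G,\nu)$, we have $e_k(G)\ge e_k^*(G)\ge m^*(G)$ for every $k$. Taking the minimum over $k$ gives $\widehat m(G)\ge m^*(G)$, so $\ell\ge m^*(G)$.

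For the upper bound, drop all but one summand. Every summand is nonnegative, so for any fixed index $k_0$,
\begin{equation*}
S\ge \tfrac1K\exp(-e_{k_0}(G)/\tau_m),
\end{equation*}
and therefore $\ell\le e_{k_0}(G)+\tau_m\log K$. Choose $k_0\in\mathcal K^*(G)$ attaining the minimum in the definition of $\Delta(G)$. This choice is possible because $\mathcal K^*(G)$ is a nonempty finite set, since $K$ is finite. Then $e_{k_0}^*(G)=m^*(G)$ and $e_{k_0}(G)-e_{k_0}^*(G)=\Delta(G)$, so $e_{k_0}(G)=m^*(G)+\Delta(G)$. Substituting gives the stated bound $\ell\le m^*(G)+\Delta(G)+\tau_m\log K$. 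The same relation $e_k\ge e_k^*$ also shows $\Delta(G)\ge0$.

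No step is a real obstacle. The only point needing care is justifying that $e_k\ge e_k^*$, because $\widehat T_k$ is defined as a Sinkhorn limit rather than an exact minimizer. Lemma~\ref{lem:art_equivariance} supplies exactly what is needed: under strictly positive kernel and marginals, $\widehat T_k$ has marginals $\mu_G$ and $\nu$, so it lies in $\Pi(\mu_G,\nu)$. Its energy therefore upper-bounds the minimum defining $e_k^*(G)=d^2(G,\beta_k)$. The remaining attainment issues are trivial because $\mathcal K^*(G)$ is finite and nonempty.
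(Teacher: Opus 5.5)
Your proof is correct and follows essentially the paper's argument: feasibility of the Sinkhorn limit gives $e_k(G)\ge e_k^*(G)$, hence $m^*(G)\le\widehat m(G)\le\ell_{\mathrm{mean}}(G)$, and keeping only the summand of an ideal nearest base attaining $\Delta(G)$ gives the upper bound. The only cosmetic difference is that the paper first bounds $\widehat m(G)\le e_{k^\dagger}(G)=m^*(G)+\Delta(G)$ and then applies its normalized soft-min lemma, whereas you drop directly to the $k_0$ summand.
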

Theorem~\ref{thm:amortized_coverage} directly connects the trainable objective to ideal finite-atlas coverage. The relaxation gap is determined by two terms: the graph-dependent coupling excess $\Delta(G)$ introduced by amortized transport and the temperature-dependent soft-min approximation term $\tau_m\log K$. 
\begin{corollary}[Finite-Atlas Coverage Consistency]
\label{cor:coverage_consistency}
Let $\epsilon_{\mathcal A}$ denote the covering radius of atlas $\mathcal A$ over the task support. For every $G\in\mathcal X_0$: $m^*(G)\le \epsilon_{\mathcal A}^2$.
Combining this relation with Theorem~\ref{thm:amortized_coverage} gives
\begin{equation}
0\le \ell_{\mathrm{mean}}(G) \le \epsilon_{\mathcal A}^2 +\Delta(G)+ \tau_m\log K.
\label{eq:coverage_consistency}
\end{equation}
\end{corollary}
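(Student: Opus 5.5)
The plan is to first establish the pointwise bound $m^*(G)\le\epsilon_{\mathcal A}^2$ directly from Definition~\ref{def:atlas}, and then substitute it into the upper bound of Theorem~\ref{thm:amortized_coverage}.

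Fix $G\in\mathcal X_0$. By definition of the covering radius, $\min_{1\le k\le K} d(G,\beta_k)\le \sup_{G'\in\mathcal X_0}\min_k d(G',\beta_k)=\epsilon_{\mathcal A}$. Since the minimum is over a finite index set, it is attained at some $k_0$ with $d(G,\beta_{k_0})\le\epsilon_{\mathcal A}$. Because $d\ge 0$ and squaring is monotone on $[0,\infty)$, this gives $d^2(G,\beta_{k_0})\le\epsilon_{\mathcal A}^2$. We then use the identity $e_k^*(G)=d^2(G,\beta_k)$ stated just before the definition of $q_k$, which rests on the existence of a minimizer in Eq.~\eqref{eq:gw_discrepancy}: $\Pi(\mu_G,\nu)$ is compact and $\mathcal E$ is continuous in $T$. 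This yields
\[
m^*(G)=\min_k e_k^*(G)\le e^*_{k_0}(G)\le\epsilon_{\mathcal A}^2 .
\]
One could equivalently note that $m^*(G)=(\min_k d(G,\beta_k))^2$, by monotonicity of $t\mapsto t^2$.

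For the two-sided bound in Eq.~\eqref{eq:coverage_consistency}, nonnegativity follows from the left inequality of Eq.~\eqref{eq:coverage_bound} together with $m^*(G)\ge 0$. The latter holds because every $e_k^*(G)$ minimizes a sum of squares weighted by the nonnegative products $T_{im}T_{jn}$. The upper bound follows by chaining $\ell_{\mathrm{mean}}(G)\le m^*(G)+\Delta(G)+\tau_m\log K$ from Theorem~\ref{thm:amortized_coverage} with $m^*(G)\le\epsilon_{\mathcal A}^2$.

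No step is a genuine obstacle. The only point needing care is that the supremum in $\epsilon_{\mathcal A}$ may be $+\infty$ in general, in which case the bound is vacuous. It may also be unattained, but that does not matter, since we only need the pointwise inequality $\min_k d(G,\beta_k)\le\epsilon_{\mathcal A}$ and never an attaining $G$. Under the paper's setting, relations take values in $[0,1]$ and measures are probabilities, so $d\le 1$ and $\epsilon_{\mathcal A}\le 1$ is finite.
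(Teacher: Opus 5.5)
Your proposal is correct and follows essentially the same route as the paper: take an atlas element $\beta_j$ with $d(G,\beta_j)\le\epsilon_{\mathcal A}$, square to get $m^*(G)\le\epsilon_{\mathcal A}^2$, and substitute this into the upper bound of Theorem~\ref{thm:amortized_coverage}. The only minor difference is the lower bound. You obtain $\ell_{\mathrm{mean}}(G)\ge 0$ from $m^*(G)\le\ell_{\mathrm{mean}}(G)$ together with $m^*(G)\ge 0$, whereas the paper argues directly that $e_k(G)\ge 0$ gives $\exp(-e_k(G)/\tau_m)\le 1$ for each $k$, so the normalized average inside the logarithm is at most one. The paper's version has the small advantage of not needing feasibility of $\widehat T_k$.
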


Corollary~\ref{cor:coverage_consistency} closes the connection between the ideal finite-atlas geometry and the objective optimized in practice.
The achievable coverage is governed by three factors: the atlas radius $\epsilon_{\mathcal A}$, the amortized alignment excess $\Delta(G)$, and the soft-min relaxation $\tau_m\log K$.

\subsection{ART-full Representation for Transfer}
Together, Theorem~\ref{thm:atlas_fidelity}, Corollary~\ref{cor:amortized_coordinate_fidelity}, and Theorem~\ref{thm:amortized_coverage} connect the learned SCGFM-ART representation to the underlying relational geometry: the finite atlas provides stable structural coordinates, while ART approximates graph–base alignment with controlled error under a tractable coverage objective.

Beyond scalar relational energies, the learned couplings encode fine-grained node-to-role correspondences between each graph and the shared atlas. These correspondences are directly reused to construct transport-conditioned feature representations for downstream transfer.

\subsubsection{Base Responsibility and Coupling Mixture}
\label{sec:base_responsibility}

Given a target graph $G$, the frozen SCGFM-ART model produces the
graph-to-base energies $\{e_k(G)\}_{k=1}^K$, the corresponding atlas
coordinate $q(G)$, and the amortized couplings $\{\widehat T_k\}_{k=1}^K$.
We convert the relational energies into base responsibilities as
\begin{equation}
w_k(G)=\frac{\exp\left(-e_k(G)/\tau_E\right)}{\sum_{\ell=1}^{K} \exp\left(-e_\ell(G)/\tau_E\right)},
\label{eq:base_responsibility}
\end{equation}
where $\tau_E>0$ is the responsibility temperature.
The atlas coordinate $q(G)$ retains the graph-to-base relational responses, whereas $w(G)=(w_k(G))_{k=1}^K$ determines the contribution of individual bases to feature recoding.
Since all relational bases are defined on the common role index set
$[M]$, their couplings can be combined directly:
\begin{equation}
T_{\mathrm{mix}}(G)=\sum_{k=1}^{K} w_k(G)\widehat T_k \in \Pi(\mu_G,\nu).
\label{eq:coupling_mixture}
\end{equation}
$T_{\mathrm{mix}}(G)$ aggregates the corresponding node-to-role mappings for downstream feature recoding.

\subsubsection{Transport-Conditioned Feature Recoding}
\label{sec:feature_recoding}

The coupling mixture maps the node attributes $X$ into the common
base-role space:
\begin{equation}
H(G)
=
N T_{\mathrm{mix}}(G)^\top X
\in
\mathbb R^{M\times d_f},
\label{eq:feature_recoding}
\end{equation}
Graphs with different numbers of nodes are represented by a fixed number of $M$ relational roles while retaining their graph-specific node attributes.

\subsubsection{Final Graph Representation}
\label{sec:final_representation}

The final ART-full representation concatenates the relational atlas
coordinate with the vectorized transport-conditioned feature map:
\begin{equation}
z(G)=\left[q(G)\| \operatorname{vec}\bigl(H(G)\bigr)\right].
\label{eq:art_full_representation}
\end{equation}

ART-full is invariant to permutations of the input nodes. Specifically,
for any permutation matrix $P$ acting on $V$, let $G'=(V,PAP^{\top},P\mu_G)$ denote the corresponding relabeled graph.
Then $z(G')=z(G)$.
The proof follows from the permutation equivariance of the ART couplings
in Lemma~\ref{lem:art_equivariance}.
\subsection{Computational Complexity}
\label{sec:complexity}

We analyze the computational cost with fixed Sinkhorn iterations.
For the sparse binary adjacency matrix $A$, the relational-energy computation uses the exact identities
\begin{align}
(A\widehat T_k)(i,a)
&=\sum_{j:(i,j)\in E}\widehat T_k(j,a),\\
\sum_{i,j}A_{ij}^2\mu_G(i)\mu_G(j) &=\sum_{(i,j)\in E}\mu_G(i)\mu_G(j).
\label{eq:sparse_energy_identity}
\end{align}
These identities enable exact sparse evaluation without $O(N^2)$ relation storage.

Treating the hidden dimension $D_h$ and the number of GIN layers as
constants, the leading per-graph computational costs are summarized in
Table~\ref{tab:complexity}.
\begin{table}[t]
\centering
\caption{Leading per-graph computational complexity of SCGFM-ART.}
\label{tab:complexity}
\begin{tabular}{lc}
\hline
\textbf{Module} & \textbf{Time Complexity} \\
\hline
Sparse graph encoder
& $O(N+|E|)$ \\
Base encoder
& $O(KM^2)$ \\
Compatibility and Sinkhorn
& $O(SKNM)$ \\
Sparse relational energy
& $O\!\left(K(|E|M+NM^2)\right)$ \\
\hline
\end{tabular}
\end{table}
The resulting per-graph forward complexity is
\begin{equation}
O\left(N+|E|+KM^2+SKNM+K|E|M+KNM^2\right).
\label{eq:total_time_complexity}
\end{equation}
For fixed architectural parameters $K$, $M$, $S$, and $D_h$, the
forward complexity scales linearly with the sparse graph size,
i.e., $O(N+|E|)$. Unlike iterative GW solvers, ART amortizes
graph-to-base matching into a fixed-depth predictor followed by $S$
Sinkhorn iterations, eliminating sample-specific outer optimization.

The per-graph memory complexity is
\begin{equation}
O\left(
N+|E|+KNM+KM^2
\right),
\label{eq:memory_complexity}
\end{equation}
dominated by the sparse graph representation, graph-to-base couplings,
and relational bases.
In comparison, SCGFM incurs ($O\left(KL(N\log N+M\log M)+K^2+NM\cdot\mathrm{iter}\right)$) inference cost due to iterative GW feature projection.
ART removes this sample-specific outer optimization and achieves ($O(N+|E|)$) scaling for fixed architectural parameters.
\begin{table*}[t]
\centering
\caption{Cross-domain 5-shot graph classification results using a prototype classifier.
Accuracy (\%) is reported as mean $\pm$ standard deviation over 50 few-shot episodes.
Best results are shown in bold. }
\label{tab:graph_classification}
\setlength{\tabcolsep}{3pt}
\begin{tabular}{@{}lccccccccc@{}}
\hline
\textbf{Method}
& \textbf{NCI1}
& \textbf{BZR}
& \textbf{COLLAB}
& \textbf{IMDB-B}
& \textbf{PROTEINS}
& \textbf{COLORS-3}
& \textbf{ogbg-molhiv}
& \textbf{Avg.}
& \textbf{Rank$\downarrow$} \\
\hline

GCN~\cite{kipf2017gcn}
& $52.86 \pm 5.41$
& $58.66 \pm 9.96$
& $54.08 \pm 7.27$
& $53.32 \pm 9.61$
& $49.96 \pm 5.97$
& $13.12 \pm 1.86$
& $52.64 \pm 7.22$
& 47.81
& 6.71 \\

GAT~\cite{velickovic2018gat}
& $52.50 \pm 6.30$
& $59.54 \pm 9.96$
& $44.39 \pm 3.86$
& $51.94 \pm 5.19$
& $59.40 \pm 9.01$
& $11.30 \pm 1.35$
& $54.12 \pm 7.33$
& 47.60
& 7.14 \\

GIN~\cite{xu2019gin}
& $50.50 \pm 6.51$
& $49.40 \pm 4.04$
& $50.79 \pm 9.42$
& $\mathbf{60.26 \pm 8.81}$
& $55.18 \pm 6.86$
& $13.13 \pm 2.14$
& $51.94 \pm 5.55$
& 47.31
& 8.00 \\
\hline
GraphCL~\cite{you2020graphcl}
& $\mathbf{59.24 \pm 9.47}$
& $58.36 \pm 8.79$
& $46.76 \pm 5.88$
& $49.46 \pm 5.37$
& $62.10 \pm 9.02$
& $9.89 \pm 1.35$
& $54.98 \pm 8.44$
& 48.68
& 6.43 \\

GraphMAE~\cite{hou2022graphmae}
& $50.34 \pm 6.28$
& $51.28 \pm 5.11$
& $63.71 \pm 6.11$
& $58.10 \pm 10.08$
& $52.80 \pm 7.00$
& $12.93 \pm 1.46$
& $54.08 \pm 7.64$
& 49.03
& 7.00 \\

GraphACL~\cite{xiao2023graphacl}
& $50.80 \pm 5.36$
& $58.18 \pm 8.88$
& $61.31 \pm 5.15$
& $54.46 \pm 6.86$
& $52.18 \pm 5.66$
& $14.08 \pm 1.84$
& $51.54 \pm 6.43$
& 48.94
& 7.14 \\
\hline
GraphGlue~\cite{sun2026graphglue}
& $50.90 \pm 4.94$
& $55.98 \pm 7.18$
& $56.96 \pm 7.02$
& $51.86 \pm 9.71$
& $57.32 \pm 7.33$
& $12.68 \pm 1.82$
& $52.44 \pm 7.45$
& 48.31
& 8.14 \\

GCOPE~\cite{zhao2024gcope}
& $58.00 \pm 7.27$
& $53.16 \pm 7.91$
& $45.40 \pm 6.12$
& $48.40 \pm 4.81$
& $59.84 \pm 10.58$
& $9.61 \pm 1.30$
& $53.92 \pm 8.36$
& 46.90
& 8.71 \\

GiT~\cite{wang2025git}
& $50.32 \pm 5.17$
& $59.88 \pm 11.02$
& $56.55 \pm 6.61$
& $51.92 \pm 7.44$
& $54.80 \pm 7.66$
& $12.89 \pm 1.99$
& $50.74 \pm 5.83$
& 48.16
& 8.57 \\

MDGMIX~\cite{zheng2026mdgmix}
& $51.16 \pm 3.72$
& $53.46 \pm 6.77$
& $56.39 \pm 6.36$
& $52.10 \pm 9.35$
& $52.48 \pm 6.16$
& $10.77 \pm 2.02$
& $51.82 \pm 4.98$
& 46.88
& 9.29 \\

RiemannGFM~\cite{sun2025riemanngfm}
& $51.88 \pm 6.03$
& $52.42 \pm 4.90$
& $50.77 \pm 4.56$
& $52.12 \pm 5.42$
& $60.30 \pm 7.13$
& $11.26 \pm 1.36$
& $53.66 \pm 5.40$
& 47.49
& 8.00 \\

\hline

SCGFM~\cite{he2026scgfm}
& $57.98 \pm 8.04$
& $56.82 \pm 6.97$
& $63.16 \pm 5.67$
& $52.92 \pm 6.46$
& $\mathbf{65.76 \pm 5.82}$
& $23.11 \pm 2.86$
& $54.38 \pm 8.51$
& 53.45
& 3.57 \\

\textbf{SCGFM-ART}
& $56.30 \pm 6.73$
& $\mathbf{62.36 \pm 6.60}$
& $\mathbf{66.33 \pm 3.39}$
& $52.96 \pm 8.43$
& $61.06 \pm 9.67$
& $\mathbf{25.31 \pm 2.72}$
& $\mathbf{55.40 \pm 7.35}$
& $\mathbf{54.25}$
& $\mathbf{2.29}$ \\

\hline
\end{tabular}
\end{table*}

\begin{table*}[t]
\centering
\caption{Cross-domain 5-shot node classification results using a linear classifier.
Accuracy (\%) is reported as mean $\pm$ standard deviation over 50 few-shot episodes.
Best results are shown in bold.}
\label{tab:node_classification}
\setlength{\tabcolsep}{3pt}
\begin{tabular}[\columnwidth]{@{}lccccccccc@{}}
\hline
\textbf{Method}
& \textbf{Cora}
& \textbf{CiteSeer}
& \textbf{PubMed}
& \textbf{Computers}
& \textbf{Photo}
& \textbf{Reddit}
& \textbf{ogbn-arxiv}
& \textbf{Avg.}
& \textbf{Rank$\downarrow$} \\
\hline

GCN~\cite{kipf2017gcn}
& $60.16 \pm 3.67$
& $34.32 \pm 4.27$
& $52.72 \pm 7.52$
& $70.38 \pm 2.89$
& $78.65 \pm 3.73$
& $66.45 \pm 1.73$
& $29.52 \pm 1.10$
& 56.03
& 8.43 \\

GAT~\cite{velickovic2018gat}
& $61.89 \pm 4.37$
& $37.03 \pm 5.23$
& $62.61 \pm 7.50$
& $61.74 \pm 3.13$
& $67.50 \pm 3.87$
& $19.39 \pm 1.12$
& $28.85 \pm 1.62$
& 48.43
& 8.86 \\

GIN~\cite{xu2019gin}
& $59.68 \pm 4.36$
& $33.99 \pm 4.27$
& $48.59 \pm 6.72$
& $59.68 \pm 2.74$
& $19.91 \pm 2.02$
& $32.72 \pm 1.97$
& $14.94 \pm 1.13$
& 38.50
& 11.43 \\

\hline

GraphCL~\cite{you2020graphcl}
& $63.24 \pm 3.72$
& $37.26 \pm 4.92$
& $62.15 \pm 7.61$
& $70.72 \pm 3.00$
& $79.00 \pm 3.66$
& $69.44 \pm 1.99$
& $31.44 \pm 1.55$
& 59.04
& 6.00 \\

GraphMAE~\cite{hou2022graphmae}
& $63.59 \pm 4.25$
& $38.14 \pm 5.74$
& $62.07 \pm 7.97$
& $74.82 \pm 2.53$
& $80.28 \pm 2.85$
& $74.30 \pm 1.69$
& $31.50 \pm 1.52$
& 60.67
& 4.00 \\

GraphACL~\cite{xiao2023graphacl}
& $64.91 \pm 4.10$
& $37.73 \pm 4.52$
& $42.36 \pm 5.26$
& $74.09 \pm 2.93$
& $81.87 \pm 3.41$
& $77.26 \pm 1.43$
& $31.89 \pm 1.60$
& 58.59
& 4.57 \\

\hline

GraphGlue~\cite{sun2026graphglue}
& $57.78 \pm 4.31$
& $34.84 \pm 4.50$
& $52.33 \pm 6.15$
& $67.22 \pm 3.12$
& $72.17 \pm 4.32$
& $53.02 \pm 1.74$
& $21.59 \pm 1.32$
& 51.28
& 10.00 \\

GCOPE~\cite{zhao2024gcope}
& $62.28 \pm 4.61$
& $38.68 \pm 4.76$
& $58.17 \pm 8.83$
& $72.13 \pm 2.40$
& $79.02 \pm 3.46$
& $74.14 \pm 1.59$
& $30.28 \pm 1.47$
& 59.24
& 5.43 \\

GiT~\cite{wang2025git}
& $62.70 \pm 4.69$
& $35.04 \pm 4.78$
& $51.61 \pm 5.90$
& $71.22 \pm 2.90$
& $74.75 \pm 4.06$
& $55.91 \pm 1.71$
& $25.58 \pm 1.43$
& 53.83
& 8.29 \\

MDGMIX~\cite{zheng2026mdgmix}
& $40.06 \pm 4.34$
& $27.19 \pm 4.11$
& $36.56 \pm 5.11$
& $47.38 \pm 4.09$
& $52.04 \pm 3.95$
& $22.90 \pm 1.47$
& $7.68 \pm 1.01$
& 33.40
& 12.71 \\

RiemannGFM~\cite{sun2025riemanngfm}
& $57.37 \pm 4.65$
& $37.30 \pm 4.81$
& $46.29 \pm 6.81$
& $72.00 \pm 3.47$
& $79.23 \pm 3.89$
& $66.99 \pm 1.58$
& $23.62 \pm 1.59$
& 54.69
& 8.14 \\

\hline
SCGFM~\cite{he2026scgfm}
& $67.55 \pm 4.13$
& $40.31 \pm 4.48$
& $\mathbf{66.97 \pm 7.46}$
& $75.76 \pm 3.56$
& $81.69 \pm 3.40$
& $83.76 \pm 1.11$
& $33.65 \pm 1.69$
& 64.24
& 2.00 \\

\textbf{SCGFM-ART}
& $\mathbf{68.14 \pm 3.84}$
& $\mathbf{41.62 \pm 4.18}$
& $66.48 \pm 5.78$
& $\mathbf{76.16 \pm 3.00}$
& $\mathbf{83.22 \pm 2.43}$
& $\mathbf{84.00 \pm 1.03}$
& $\mathbf{35.46 \pm 1.42}$
& $\mathbf{65.01}$
& $\mathbf{1.14}$ \\

\hline
\end{tabular}
\end{table*}

\section{Experiments}
\label{sec:Experiments}
We organize the empirical study into five complementary tiers to evaluate SCGFM-ART from transfer performance to mechanism and scalability.

\subsection{Cross-Domain Few-Shot Transfer}
\textbf{Experimental protocol.}
We evaluate all methods under a unified 5-shot cross-domain transfer protocol
Each class provides five support samples and at most 50 disjoint queries, with results averaged over 50 fixed episodes.
All methods share identical support/query splits, and normalization is fitted only on the support set.

For graph classification, we use NCI1~\cite{wale2008descriptor,shervashidze2011wl}, BZR~\cite{sutherland2003spline}, COLLAB, IMDB-BINARY~\cite{yanardag2015deepgraphkernels}, and PROTEINS~\cite{borgwardt2005protein} for leave-one-dataset-out (LODO) evaluation, while COLORS-3~\cite{knyazev2019attention,morris2020tudataset} and ogbg-molhiv~\cite{wu2018moleculenet,hu2020ogb} are evaluated using the checkpoint jointly pretrained on all five source datasets.
We use a prototype classifier~\cite{snell2017prototypical} as the primary graph-level readout and report Accuracy as the main metric.

For node classification, we analogously perform LODO evaluation on Cora, CiteSeer, and PubMed~\cite{yang2016planetoid}, Computers, and Photo~\cite{shchur2018pitfalls}, with Reddit~\cite{hamilton2017graphsage} and ogbn-arxiv~\cite{hu2020ogb} as additional unseen targets.
Each node is represented by its Personalized Page Rank (PPR) subgraph during pretraining~\cite{bojchevski2020ppr}. Since node-feature dimensions vary across datasets and some baselines require a fixed input width, we apply the same fixed Gaussian random projection (seed 42) to all methods, mapping node attributes to 256 dimensions.
We use a frozen linear classifier as the primary node-level readout and report Accuracy.

\textbf{Baselines.}
We compare SCGFM-ART with three groups of baselines:
(1) conventional GNNs, including
GCN~\cite{kipf2017gcn},
GAT~\cite{velickovic2018gat}, and
GIN~\cite{xu2019gin};
(2) graph self-supervised learning methods, including
GraphCL~\cite{you2020graphcl},
GraphMAE~\cite{hou2022graphmae}, and
GraphACL~\cite{xiao2023graphacl}; and
(3) graph foundation models, including
GraphGlue~\cite{sun2026graphglue},
GCOPE~\cite{zhao2024gcope},
GiT~\cite{wang2025git},
MDGMIX~\cite{zheng2026mdgmix},
RiemannGFM~\cite{sun2025riemanngfm}, and
SCGFM~\cite{he2026scgfm}.
For fair comparison, GCN, GAT, and GIN use three message-passing
layers, while methods with a configurable graph encoder use a
three-layer GCN backbone.
We follow the official implementations and recommended training
settings whenever available.

\textbf{Graph classification results.}
As shown in Table~\ref{tab:graph_classification}, SCGFM-ART achieves the highest macro-average Proto Accuracy of \textbf{54.25\%} and the best average rank of \textbf{2.29}.
Compared with SCGFM, it improves the average accuracy by 0.80 percentage points and reduces the average rank from 3.57 to 2.29, demonstrating more consistent cross-domain transfer across the evaluated graph datasets.

\textbf{Node classification results.}
As shown in Table~\ref{tab:node_classification}, SCGFM-ART achieves
the highest macro-average Linear Accuracy of \textbf{65.01\%} and
the best average rank of \textbf{1.14}.
It outperforms SCGFM by 0.77 percentage points on average and improves
the average rank from 2.00 to 1.14, achieving the best performance on
six of the seven target datasets.

Overall, the Tier-A results answer: the relational atlas learned by SCGFM-ART yields a frozen representation that remains competitive across heterogeneous and previously unseen graph domains.
The improvement over SCGFM is particularly reflected in average rank, indicating broader cross-domain competitiveness rather than an improvement confined to a small number of targets.

\subsection{Core Component Analysis}
\label{sec:component_analysis}

We examine which components of SCGFM-ART contribute to the
learned representation and how the parameterization of the relational atlas affects downstream performance.
We select BZR and PROTEINS as representative graph-level datasets and
Cora and Computers as representative node-level datasets.
All component variants follow the same unsupervised pretraining
objective and downstream protocol used by the full model.
Each pretrained model is evaluated over the same 50 deterministic
5-shot episodes and across 5 different random seeds.

\begin{figure}[!t]
\centering
\includegraphics[width=0.90\columnwidth]{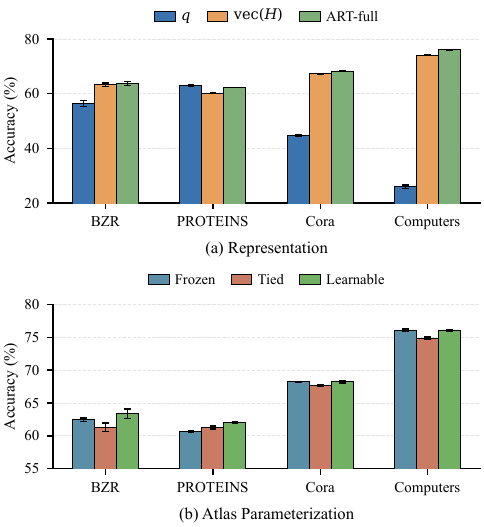}
\caption{
Component analysis of SCGFM-ART on representative graph- and node-level datasets.
Bars denote mean accuracy over five matched pretraining seeds, and error
bars denote standard deviations.
}
\label{fig:rq2_component_analysis}
\end{figure}

\textbf{Representation decomposition.}
SCGFM-ART combines two complementary quantities: the relational atlas
coordinate $q(G)$ and the transport-conditioned feature representation
$H(G)$.
To isolate their contributions without introducing additional
optimization variation, we extract three representations from the same
frozen full-model checkpoint for each seed:
$q(G)$, $\operatorname{vec}(H(G))$, $[q(G)\Vert\operatorname{vec}(H(G))]$.

As shown in Fig.~\ref{fig:rq2_component_analysis}, $H$ carries most transferable information on feature-rich targets.
Compared with the coordinate-only representation, full model improves accuracy by 7.01, 23.50, and 50.12 points, respectively, with particularly large gains on node classification.
In contrast, $q$ contributes more at the graph level: adding it to $\operatorname{vec}(H)$ raises the graph-level average from 61.78\% to 62.32\%, including a 1.02-point gain on PROTEINS.
Notably, PROTEINS favors structural information, where $q$ alone reaches 63.04\%, 1.76 points above ART-full. These results suggest that $q$ captures compact global structure, while $H$ primarily encodes role-conditioned attribute information.

\textbf{Atlas parameterization}.
For the relational atlas, independent learnable bases achieve the best graph-level average of 62.32\%, compared with 61.54\% for frozen random bases and 61.68\% for a tied atlas.
Sharing a single relational kernel across all $K$ slots reduces accuracy by 2.05, 0.54, and 1.20 points on BZR, Cora, and Computers, respectively, supporting the benefit of multiple relational references. On the two node-level targets, however, frozen and learned independent atlases perform almost identically (72.19\% vs. 72.16\%), indicating that graph-to-role transport already provides strong structural routing in feature-rich settings.

\textbf{Atlas capacity.}
We further examine the sensitivity to the number of relational bases
$K$ and the number of roles per base $M$.
Fig.~\ref{fig:km_capacity} varies one dimension while fixing the
other at its default value, $K=16$ and $M=32$.
Graph-level performance is evaluated on BZR and PROTEINS, and
node-level performance on Cora and Computers.

\begin{figure}[t]
\centering
\includegraphics[width=0.8\columnwidth]{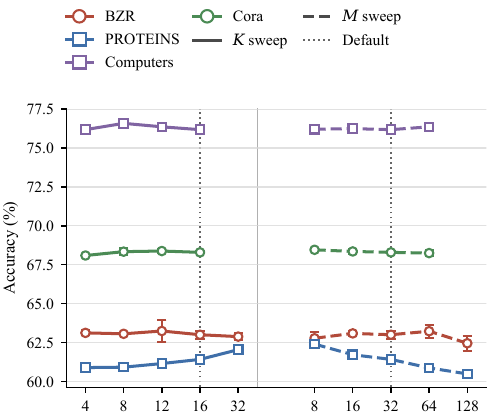}
\caption{
Sensitivity of SCGFM-ART to the atlas capacity on representative graph- and node-level datasets. Solid and dashed curves show the effects of varying the number of bases ($K$) and the base size ($M$), respectively.
Dotted vertical lines mark the default configuration, $K=16$ and $M=32$.
}
\label{fig:km_capacity}
\end{figure}

Fig.~\ref{fig:km_capacity} shows that graph-level transfer is more
sensitive to atlas capacity than node-level transfer.
Increasing $K$ improves PROTEINS while leaving BZR largely unchanged,
whereas both Cora and Computers remain stable over the evaluated range.
This pattern is consistent with $K$ primarily controlling the breadth
of the structural reference system: additional bases can increase
coverage of heterogeneous graph structures, while providing limited
additional benefit when downstream discrimination is already dominated
by transported node attributes.

The effect of $M$ is qualitatively different.
Increasing the number of roles does not yield monotonic improvements:
BZR reaches its highest accuracy at an intermediate resolution, while
PROTEINS gradually decreases as $M$ becomes large.
A larger role space increases the resolution and degrees of freedom of
each graph--base correspondence, but the additional capacity need not
translate into more informative transport when the available structural
signal is limited.
The node-level results are again comparatively insensitive to $M$,
indicating that SCGFM-ART does not rely on a narrowly tuned role
cardinality for these targets.

Overall, $K$ and $M$ are not interchangeable capacity parameters.
$K$ mainly controls the breadth of the relational atlas, whereas $M$
controls the granularity of individual graph--base correspondences.
The default setting $(K,M)=(16,32)$ lies in a stable operating region
across both graph- and node-level tasks, while avoiding the additional
computation associated with unnecessarily large atlases or role spaces.

\subsection{Amortized Relational Alignment}
\label{sec:amortized_alignment}

\textbf{Experimental protocol.}
We evaluate whether ART can replace iterative graph--base alignment while preserving the utility of the resulting representations.
Experiments are conducted on BZR, COLLAB, Cora, and Computers using the corresponding frozen LODO checkpoints.
For each target dataset, we stratify 128 target objects and evaluate all $K=16$ graph--base pairs.
We compare three inference schemes: direct ART prediction, ART followed by 10 iterative refinement steps, and an iterative relational solver used as the numerical reference.
The reference solver runs for 200 outer iterations with at most 500 Sinkhorn iterations per outer step; three deterministic initializations are evaluated for each graph--base pair, and the solution with the lowest relational energy is retained.
Alignment consistency is measured by the Spearman correlation between the $K$ graph--base energy responses and by agreement on the minimum-energy base.
We further reconstruct the downstream representation from the
corresponding couplings and report classification accuracy under the same 5-shot protocol, together with end-to-end latency per graph--base pair.

\begin{table}[t]
\centering
\caption{
Comparison of amortized and iterative relational alignment on BZR, COLLAB, Cora, and Computers.
}
\label{tab:amortized_alignment}
\setlength{\tabcolsep}{2.8pt}
\resizebox{\columnwidth}{!}{
\begin{tabular}{@{}lccccc@{}}
\hline
\textbf{Method}
& $\boldsymbol{\rho}$
& \textbf{Top-1 (\%)}
& \textbf{Acc. (\%)}
& \textbf{ms/pair}
& \textbf{Speedup} \\
\hline
ART
& 0.703
& 41.0
& $\mathbf{68.30}$
& $\mathbf{3.04}$
& $\mathbf{24.3\times}$ \\

ART + 10-step refinement
& $\mathbf{0.997}$
& $\mathbf{99.4}$
& 67.09
& $4.49$
& $16.4\times$ \\

Iterative reference
& 1.000
& 100.0
& 67.09
& 73.79
& $1.0\times$ \\
\hline
\end{tabular}
}
\end{table}

\textbf{Results.}
Table~\ref{tab:amortized_alignment} shows that direct ART reduces the average alignment latency from 73.79 to 3.04 ms per graph--base pair, yielding a $24.3\times$ speedup over the iterative reference.
Despite moderate agreement with the reference
($\rho=0.703$ and 41.0\% Top-1 agreement), ART achieves the highest downstream accuracy of 68.30\%, exceeding the iterative reference by 1.21 percentage points on average.

A small amount of iterative refinement nearly recovers the numerical reference: 10 refinement steps increase $\rho$ from 0.703 to 0.997 and Top-1 agreement from 41.0\% to 99.4\%, while retaining a $16.4\times$ speedup.
Its downstream accuracy, however, decreases to 67.09\%, matching that of the iterative reference.
The same pattern is visible across individual targets.
Direct ART improves over the reference by 2.76 and 2.02 percentage points on BZR and COLLAB, respectively, while the differences on Cora and Computers remain within 0.2 points.

These results show that reproducing the numerical reference more closely does not translate into higher downstream accuracy in this setting.
Direct ART provides the strongest transfer performance at the lowest inference cost, whereas short iterative refinement offers an optional route to near-reference alignment when higher numerical agreement is required.

\subsection{Structure-Conditioned Representation and Atlas Interpretation}
\label{sec:structure_conditioned}

\textbf{Experimental protocol.}
We examine the structural information retained by the frozen ART
representation on unseen target graphs and how this information is
organized across the learned relational atlas.
Experiments use the LODO checkpoints for BZR, COLLAB, Cora, and
Computers, with 128 stratified target objects sampled from each dataset.
The analysis consists of two complementary diagnostics.
First, we apply controlled topology perturbations while preserving node
degrees and attributes, and measure the resulting changes in atlas
coordinates, graph-to-base couplings, and transport-conditioned
features.
Second, we examine how the $K$ relational bases are utilized across
target objects and whether their response profiles remain differentiated.
All model parameters are frozen throughout the analysis.

\textbf{Controlled topology perturbation.}
For each target object, we generate perturbed graphs using undirected
double-edge swaps with requested rewiring ratios $p\in\{0,0.1,0.2,0.4,0.6\}$.
The perturbation preserves the degree of every node and leaves node attributes unchanged, while progressively modifying graph connectivity.
We report the realized rewiring ratio and quantify representation changes at three stages of SCGFM-ART:
\begin{align}
\Delta_q
&=
1-\cos\!\left(q(G),q(G^{(p)})\right),\\
\Delta_T
&=
\frac{1}{K}\sum_{k=1}^{K}
\frac{
\|\widehat T_k-\widehat T_k^{(p)}\|_F
}{
\|\widehat T_k\|_F+\epsilon
},\\
\Delta_H
&=
1-\cos\!\left(
\operatorname{vec}(H(G)),
\operatorname{vec}(H(G^{(p)}))
\right), 
\end{align}
where $\cos(\cdot,\cdot)$ denotes cosine similarity, and $\epsilon>0$ is a small constant for numerical stability.
The three rewiring repeats are first averaged within each object, after
which 95\% confidence intervals are computed over the 128 target objects.
A structure-independent attribute-pooling representation is evaluated under the same perturbations as a control.

\begin{figure}[t]
\centering
\includegraphics[width=\columnwidth]{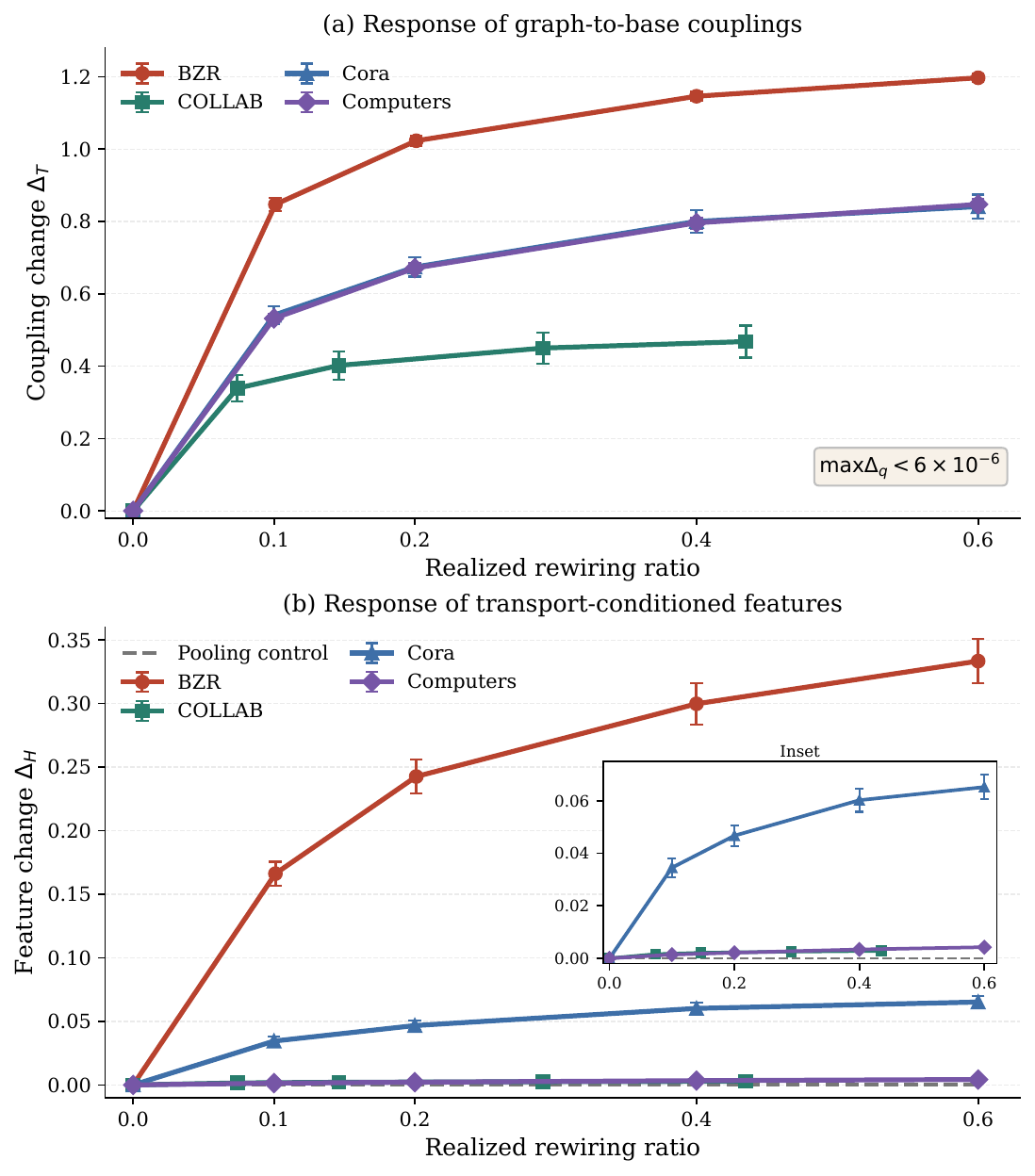}
\caption{
Response of the frozen ART representation to degree-preserving topology
perturbations.
The horizontal axis reports the realized rewiring ratio.
(a) Relative variation $\Delta_T$ of graph-to-base couplings.
(b) Variation $\Delta_H$ of the transport-conditioned feature
representation.
Error bars denote 95\% confidence intervals over 128 target objects
after averaging three rewiring repeats within each object.
The structure-independent pooling control remains unchanged, while
$\Delta_q$ stays below $6\times10^{-6}$ across all perturbation levels.
}
\label{fig:topology_perturbation}
\end{figure}

\textbf{Perturbation results.}
Fig.~\ref{fig:topology_perturbation} (a) shows that coupling variation increases consistently as progressively more edges are rewired.
At the largest realized perturbation, $\Delta_T$ reaches
$1.197\pm0.011$ on BZR,
$0.468\pm0.044$ on COLLAB,
$0.841\pm0.033$ on Cora, and
$0.847\pm0.015$ on Computers.
In contrast, $\Delta_q$ remains below $6\times10^{-6}$ throughout the experiment.
The graph-to-base energy responses are therefore considerably more stable under degree-preserving rewiring than the underlying node-to-role correspondences.

The response of $H$ exhibits stronger dataset dependence, even when
the corresponding changes in ART couplings are comparable.
This behavior is consistent with Eq. \eqref{eq:feature_recoding} 
where topology perturbations alter the transport assignments encoded by $T_{\mathrm{mix}}$, while the resulting feature-space variation also depends on the distribution of node attributes over the reassigned nodes.
Consequently, comparable changes in coupling space can produce markedly
different responses in $H$ across target domains.
The invariant attribute-pooling control further localizes this effect
to the structure-conditioned redistribution of fixed node attributes.

These observations separate two levels of structural information in SCGFM-ART.
The coordinate $q$ provides a stable global response to the relational atlas, whereas the couplings retain substantially finer variation in
node-to-role correspondence.
The latter variation is subsequently reflected in $H$ according to the interaction between the learned transport and target attributes.

\textbf{Relational-atlas interpretation.}
We examine whether the differentiated correspondence behavior
observed above is supported by a diverse utilization of the learned
atlas.
For each target dataset, we compute the average responsibility of base
$k$ as
\begin{equation}
\bar w_k
=
\frac{1}{|\mathcal G|}
\sum_{G\in\mathcal G} w_k(G),
\end{equation}
and summarize the resulting distribution by the effective number of
bases
\begin{equation}
N_{\mathrm{eff}}
=
\exp\left(
-\sum_{k=1}^{K}
\bar w_k\log \bar w_k
\right).
\end{equation}
For each base, we additionally retain the eight target objects with the
largest responsibilities.
We measure the mean pairwise Jaccard overlap between these top-response
sets and the number of distinct objects contained in their union.
Finally, we compute the maximum total-variation (TV) distance between
the responsibility-weighted class composition associated with an
individual base and the overall class distribution of the dataset.

\begin{table}[t]
\centering
\caption{
Utilization and response differentiation of the learned relational atlas.
$N_{\mathrm{eff}}$ denotes the effective number of bases.
Jaccard is the mean pairwise overlap between the top-8 response sets of
different bases, and Unique is the size of their union.
Max TV denotes the largest deviation between a base-specific
responsibility-weighted class composition and the dataset-level class
distribution.
}
\label{tab:atlas_interpretation}
\setlength{\tabcolsep}{3.5pt}
\resizebox{\columnwidth}{!}{
\begin{tabular}{@{}lcccc@{}}
\hline
\textbf{Dataset}
& $\boldsymbol{N_{\mathrm{eff}}}$
& \textbf{Top-8 Jaccard}
& \textbf{Unique}
& \textbf{Max TV} \\
\hline
BZR       & 16.000 & 0.053 & 73 & 0.0004 \\
COLLAB    & 15.998 & 0.046 & 74 & 0.0140 \\
Cora      & 16.000 & 0.075 & 60 & 0.0023 \\
Computers & 15.996 & 0.059 & 71 & 0.0051 \\
\hline
\end{tabular}
}
\end{table}

\textbf{Atlas results.}
Table~\ref{tab:atlas_interpretation} shows that the learned atlas remains broadly utilized across all four target datasets.
The effective number of bases are all close to the maximum value $K=16$, indicating that responsibility mass remains distributed across the atlas rather than concentrating on a small subset of bases.

The object-level response profiles are considerably more differentiated.
The mean pairwise overlap between the top-8 response sets is only
0.046--0.075, while their unions cover 60--74 of the 128 sampled
objects.
Thus, similar aggregate utilization across bases coexists with distinct
high-response object sets.
The learned atlas consequently maintains broad global participation
while preserving base-dependent responses at the level of individual
target structures.

The class-composition diagnostic provides an additional characterization
of this differentiation.
The maximum TV distance remains below 0.015 on every dataset, showing
that the different response profiles are accompanied by only minor
changes in class composition.
The learned bases therefore organize relational variation that extends
across target classes rather than simply partitioning objects according
to downstream labels.

Taken together, the two diagnostics characterize complementary aspects
of the ART representation.
Degree-preserving perturbations reveal fine-grained structural
sensitivity in the graph-to-role couplings beyond the comparatively
stable atlas coordinates, while the atlas-level statistics show that
these correspondences are supported by broadly utilized and
differentiated relational references.
This provides empirical support for the two-part SCGFM-ART construction:
$q$ summarizes global graph-to-atlas responses, whereas the learned
couplings retain finer structural correspondence that conditions the
transported feature representation $H$.

\subsection{End-to-End Efficiency and Scalability}
\label{sec:efficiency_scalability}

\textbf{Experimental protocol.}
We examine how amortized relational transport changes the computational profile and scalability of SCGFM.
The controlled synthetic experiment therefore focuses on SCGFM and SCGFM-ART, isolating the effect of replacing the per-instance geometric alignment in SCGFM with amortized graph--base prediction.
We evaluate two quantities that directly characterize this change: frozen-inference latency and peak training memory.
The former measures the transfer-time cost of relational alignment, while the latter measures the additional memory required to maintain and optimize graph--base couplings during atlas learning.

For broader computational context, we additionally benchmark GraphGlue, GCOPE, GiT, MDGMIX, RiemannGFM, SCGFM, and SCGFM-ART on COLLAB graphs and Reddit PPR subgraphs.
All methods process the same saved graph instances within each benchmark.
Each configuration is executed in a separate process with CUDA synchronization and repeated five times.
The synthetic benchmark uses 10 warm-up and 30 timed iterations with batch size 32, while the real-world benchmark uses 10 warm-up and 20 timed iterations with batch size 64.
Runtime statistics are reported as medians over the five repeats, and
memory denotes peak allocated GPU memory.

\textbf{Computational effect of amortized relational transport.}
We generate synthetic sparse graphs with fixed average degree $\bar d=8$ and vary the number of node $N$.
The same graph batch is used by SCGFM and SCGFM-ART at each scale.

\begin{figure}[t]
\centering
\includegraphics[width=0.85\columnwidth]{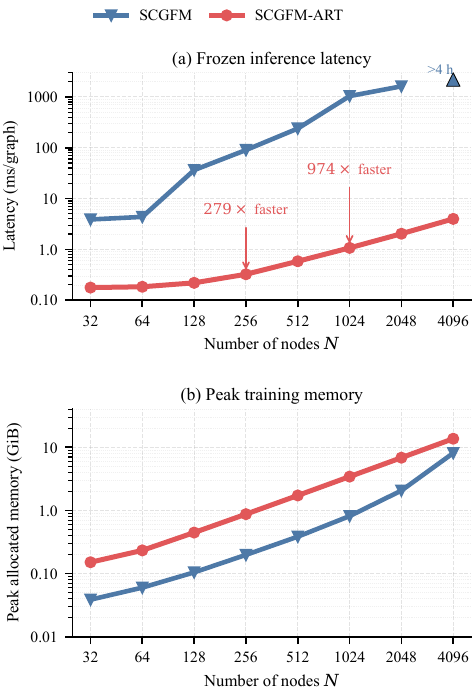}
\caption{
Controlled scalability of SCGFM and SCGFM-ART on synthetic sparse graphs with average degree $\bar d=8$.
(a) Frozen-inference latency as graph size increases.
(b) Peak allocated GPU memory during training.
}
\label{fig:rq5_scalability}
\end{figure}

\textbf{Synthetic scaling results.}
Fig.~\ref{fig:rq5_scalability} (a) shows that the computational effect of ART.
At $N=256$, SCGFM-ART reduces the inference latency of SCGFM by $279\times$, and the reduction reaches $974\times$ at $N=1024$.
At $N=2048$, the corresponding latencies are 2.031 and 1621.75 ms per graph.
The separation continues to increase at the largest scale: the SCGFM run at $N=4096$ exceeds four hours totally, whereas SCGFM-ART completes inference in 3.979 ms per graph.
This trend reflects the replacement of repeated target-specific alignment with an amortized prediction path whose execution depth does not increase with graph size.

Fig.~\ref{fig:rq5_scalability} (b) characterizes the corresponding space cost during pretraining.
SCGFM-ART requires additional memory to retain the $K$ graph--base couplings and their intermediate computations, but its memory growth remains regular over the full range.
From $N=256$ to $4096$, graph size increases by $16\times$, while peak training memory increases from 0.874 to 13.69 GiB, approximately $15.7\times$.
The empirical log--log scaling exponent is 0.99, closely matching the linear dependence on sparse graph size predicted for fixed $K$ and $M$.
Thus, the additional coupling memory introduced by ART changes the
constant computational cost of pretraining without introducing a higher-order dependence on $N$ over the evaluated regime.

Taken together, the two panels expose the computational trade-off introduced by amortization.
SCGFM-ART allocates additional memory to learn graph--base correspondences during pretraining, while eliminating the rapidly growing per-instance alignment cost during frozen transfer.
The resulting computation is shifted from repeated target-time optimization to the reusable pretrained ART predictor.

\textbf{Real-world efficiency.}
We further evaluate the same computational trade-off on COLLAB and Reddit PPR subgraphs.
The real-world benchmark reports training throughput, frozen-inference latency, and peak training memory, and additionally includes representative GFMs to provide a broader runtime context.

\begin{table}[t]
\centering
\caption{
End-to-end efficiency on COLLAB and Reddit PPR subgraphs.
}
\label{tab:real_efficiency}
\resizebox{\columnwidth}{!}{
    \begin{tabular}{@{}llrrr@{}}
    \hline
    \textbf{Dataset}
    & \textbf{Method}
    & \textbf{Train $\uparrow$}
    & \textbf{Infer. $\downarrow$}
    & \textbf{Mem. $\downarrow$} \\
    &
    & \textbf{(G/s)}
    & \textbf{(ms)}
    & \textbf{(GiB)} \\
    \hline
    
    COLLAB & GraphGlue
    & 4917.13 & 0.087 & 0.239 \\
    COLLAB & GCOPE
    & 2697.96 & 0.152 & 0.102 \\
    COLLAB & GiT
    & 271.45 & 0.026 & 0.095 \\
    COLLAB & MDGMIX
    & 1409.52 & $\mathbf{0.025}$ & $\mathbf{0.029}$ \\
    COLLAB & RiemannGFM
    & 199.91 & 4.298 & 0.773 \\
    COLLAB & SCGFM
    & $\mathbf{10191.93} $& 11.567 & 0.296 \\
    COLLAB & \textbf{SCGFM-ART}
    & 1794.78 & 0.262 & 1.396 \\
    \hline
    
    Reddit & GraphGlue
    & 983.12 & 0.328 & 5.127 \\
    Reddit & GCOPE
    & 873.64 & 0.371 & 2.208 \\
    Reddit & GiT
    & 56.89 & $\mathbf{0.234}$ & 2.065 \\
    Reddit & MDGMIX
    & 1282.96 & 0.235 & $\mathbf{0.362}$ \\
    Reddit & RiemannGFM
    & 143.84 & 4.623 & 26.862 \\
    Reddit & SCGFM
    & $\mathbf{7391.40}$ & 139.843 & 0.770 \\
    Reddit & \textbf{SCGFM-ART}
    & 302.53 & 1.643 & 4.028 \\
    \hline
    \end{tabular}
}
\end{table}

Table~\ref{tab:real_efficiency} confirms that the transfer-time gains of amortized relational transport persist on real graph workloads.
Compared with SCGFM, SCGFM-ART reduces frozen-inference latency by $44.2\times$ on COLLAB and $85.1\times$ on Reddit.
The larger gain on Reddit is consistent with the widening separation observed with increasing graph size in Fig.~\ref{fig:rq5_scalability}, showing that the benefit of amortization becomes more pronounced on larger target graphs.

SCGFM-ART shifts relational alignment from repeated target-time optimization into the learned atlas and ART predictor.
This shift increases the computation and memory required during pretraining, while substantially reducing the recurring cost of frozen transfer.
Together with the controlled scaling results, the real-world benchmarks show that ART converts the dominant per-instance alignment cost of SCGFM into a reusable prediction process, yielding a substantially more scalable computational profile for structure-centric graph transfer.

\section{Conclusion}
We presented SCGFM-ART, a structure-centric foundation framework that unifies heterogeneous graphs by mapping them onto a shared relational atlas via Amortized Relational Transport (ART).
This dual formulation captures global atlas response coordinates alongside fine-grained node-to-role structural correspondences, providing a canonical reference frame for cross-domain transfer.
We established theoretical guarantees for coordinate fidelity and coverage bounds, proving that our amortized objective reliably proxies ideal relational alignment.Across 14 cross-domain benchmarks, SCGFM-ART achieves state-of-the-art few-shot transfer performance at both node and graph levels while preserving fine-grained topological nuances.
By replacing iterative target-time alignment with amortized prediction, SCGFM-ART scales linearly with sparse graph size and delivers 44.2×–85.1× faster frozen inference on real-world benchmarks, establishing relational atlases and amortized transport as a scalable paradigm for graph foundation models.

\section*{Acknowledgments}
This work was supported by the National Natural Science
Foundation of China (No. U24A20323).

\clearpage
\section*{Supplementary Material for SCGFM-ART}
\appendices

\setcounter{theorem}{0}
\setcounter{lemma}{0}
\setcounter{proposition}{0}
\setcounter{corollary}{0}
\setcounter{definition}{0}
\setcounter{assumption}{0}

\section{Dataset Details}
\label{app:dataset_details}

\begin{table*}[t]
\centering
\caption{Statistics and node-feature configurations of the
graph-classification datasets. $|V|$ denotes the number of graphs.
Node and edge counts are averages per graph, with each undirected edge counted
once.}
\label{tab:app_graph_datasets}
\begin{tabular}[width=\linewidth]{@{}lrrrrl@{}}
\hline
\textbf{Dataset}
& $|V|$
& \textbf{Classes}
& \textbf{Avg. nodes}
& \textbf{Avg. edges}
& \textbf{Role} \\
\hline
NCI1
& 4,110
& 2
& 29.87
& 32.30
& LODO source/target \\

BZR
& 405
& 2
& 35.75
& 38.36
& LODO source/target \\

COLLAB
& 5,000
& 3
& 74.49
& 2,457.22
& LODO source/target \\

IMDB-BINARY
& 1,000
& 2
& 19.77
& 96.53
& LODO source/target \\

PROTEINS
& 1,113
& 2
& 39.06
& 72.82
& LODO source/target \\

COLORS-3
& 10,500
& 11
& 61.31
& 91.03
& Unseen target \\

\texttt{ogbg-molhiv}
& 41,127
& 2
& 25.51
& 27.47
& Unseen target \\
\hline
\end{tabular}
\end{table*}
We evaluate SCGFM-ART on 14 datasets spanning molecular, biological,
social, citation, and product co-purchase graphs. The graph-classification
benchmarks contain independent graphs with graph-level labels, whereas each
node-classification benchmark is a single large graph with labels attached to
its nodes. Statistics in Tables~\ref{tab:app_graph_datasets}
and~\ref{tab:app_node_datasets} are computed before the episodic sampling used
in our experiments. For the TU datasets, an edge is counted once; for the
node-classification datasets, the edge counts follow the PyTorch Geometric
storage convention, under which an undirected edge is represented in both
directions. The edge count of \texttt{ogbn-arxiv} is instead the number of
directed citation edges.

\subsection{Graph-Classification Datasets}

The TU datasets are distributed through the TUDataset collection
\cite{morris2020tudataset}. NCI1, BZR, COLLAB, IMDB-BINARY, and PROTEINS serve
as the five source domains in leave-one-dataset-out (LODO) evaluation: when one
of these datasets is the target, the model is pretrained on the other four.
For the two additional unseen targets, COLORS-3 and \texttt{ogbg-molhiv}, we
use the checkpoint jointly pretrained on all five source datasets. Each target
class contributes five support graphs and at most 50 disjoint query graphs in
each episode, as specified in the main paper.

\begin{itemize}
    \item \textbf{NCI1.} NCI1 is a molecular graph benchmark derived from
    chemical compounds screened for anti-cancer activity
    \cite{wale2008descriptor,shervashidze2011wl}. Each node represents an atom,
    each edge represents a chemical bond, and the categorical node label
    specifies the atom type. The 4,110 compounds form a binary graph
    classification task according to their assay activity. NCI1 is relatively
    sparse and is used to test transfer to molecular topology without
    continuous node attributes.

    \item \textbf{BZR.} BZR contains 405 molecular compounds collected for a
    quantitative structure--activity relationship study
    \cite{sutherland2003spline}. Graph labels distinguish active and inactive
    ligands of the benzodiazepine receptor. Atoms and chemical bonds form the
    nodes and edges, respectively. In addition to categorical atom labels, the
    TU release provides three-dimensional node coordinates; consequently, BZR
    supplies both discrete identity and continuous geometric attributes.

    \item \textbf{COLLAB.} COLLAB is a scientific collaboration benchmark
    introduced with Deep Graph Kernels \cite{yanardag2015deepgraphkernels}.
    Each graph is an ego-network of a researcher, nodes are researchers, and an
    edge indicates coauthorship. The target is the research field of the ego
    researcher---high-energy physics, condensed-matter physics, or
    astrophysics---yielding three classes. COLLAB has no intrinsic node labels
    or attributes and is much denser than the other graph-level datasets.

    \item \textbf{IMDB-BINARY.} IMDB-BINARY is also an ego-network benchmark
    \cite{yanardag2015deepgraphkernels}. Nodes denote actors and an edge joins
    two actors who appeared in the same film. Each of the 1,000 graphs is
    labeled by one of two movie genres (Action or Romance). The dataset does
    not provide intrinsic node features, making its prediction primarily
    dependent on collaboration structure.

    \item \textbf{PROTEINS.} PROTEINS contains 1,113 protein graphs and asks
    whether a protein is an enzyme \cite{borgwardt2005protein}. Nodes represent
    secondary-structure elements, and edges encode neighborhood relations in
    the amino-acid sequence or in three-dimensional space. The TU release
    provides categorical secondary-structure labels together with one
    continuous node attribute. This dataset therefore combines biological
    topology with node-side information.

    \item \textbf{COLORS-3.} COLORS-3 is a synthetic structural benchmark
    introduced by Knyazev \emph{et al.} \cite{knyazev2019attention}. It contains
    10,500 randomly generated graphs with color-derived node attributes. The
    graph label is determined by the number of green nodes, producing 11
    classes. Because the relevant evidence must be aggregated over the entire
    graph, COLORS-3 tests whether a representation learned from the five
    real-world source domains can transfer to a controlled counting task.

    \item \textbf{ogbg-molhiv.} \texttt{ogbg-molhiv} is the OGB version of the
    HIV molecular-property dataset from MoleculeNet
    \cite{wu2018moleculenet,hu2020ogb}. It contains 41,127 molecules processed
    with RDKit. Nodes are atoms and edges are chemical bonds. The dataset provides nine
categorical atom-feature fields (e.g., atomic number, chirality, formal
charge, and ring membership), together with categorical bond attributes.
In our experiments, however, the encoders use the atom features and
unweighted graph connectivity; the bond attributes are not included in the
model input.
    The binary label indicates whether a molecule inhibits HIV replication.
    OGB defines a scaffold split and ROC-AUC as its standard benchmark
    protocol; in our cross-domain study, however, it is treated as an unseen
    target under the same episodic 5-shot protocol as the other datasets.
\end{itemize}

\subsection{Node-Classification Datasets}

Cora, CiteSeer, PubMed, Computers, and Photo form the five node-level domains
used for LODO pretraining and evaluation. Reddit and \texttt{ogbn-arxiv} are
additional unseen targets evaluated using the checkpoint pretrained jointly on
the five source domains. To expose a common graph-level interface during
pretraining, we represent each target node by its Personalized PageRank (PPR)
subgraph \cite{bojchevski2020ppr}. As raw feature widths differ markedly across
domains we construct a deterministic Gaussian projection matrix for each
input width using the same random seed (42), and map all node features to 256
dimensions. The original feature dimensions reported
below are those before this projection.

\begin{table*}[t]
\centering
\caption{Statistics of the node-classification datasets. The counts follow the
processed PyTorch Geometric/OGB versions used in our experiments. For the six
undirected PyTorch Geometric graphs, the table reports stored directed edge
entries; \texttt{ogbn-arxiv} retains directed citation edges.}
\label{tab:app_node_datasets}
\begin{tabular}{@{}lrrrrl@{}}
\hline
\textbf{Dataset} & \textbf{Nodes} & \textbf{Edges} & \textbf{Features}
& \textbf{Classes} & \textbf{Role} \\
\hline
Cora             & 2,708   & 10,556      & 1,433 & 7  & LODO source/target \\
CiteSeer         & 3,327   & 9,104       & 3,703 & 6  & LODO source/target \\
PubMed           & 19,717  & 88,648      & 500   & 3  & LODO source/target \\
Computers        & 13,752  & 491,722     & 767   & 10 & LODO source/target \\
Photo            & 7,650   & 238,162     & 745   & 8  & LODO source/target \\
Reddit           & 232,965 & 114,615,892 & 602   & 41 & Unseen target \\
\texttt{ogbn-arxiv} & 169,343 & 1,166,243 & 128 & 40 & Unseen target \\
\hline
\end{tabular}
\end{table*}

\begin{itemize}
    \item \textbf{Cora, CiteSeer, and PubMed.} These Planetoid citation
    networks \cite{yang2016planetoid} represent scientific documents as nodes
    and citation relations as edges. Sparse bag-of-words vectors provide node
    features, and the node labels denote document topics. The processed graphs
    contain 2,708, 3,327, and 19,717 nodes and have 7, 6, and 3 classes,
    respectively. They provide three related but differently sized citation
    domains for measuring cross-domain transfer.

    \item \textbf{Computers and Photo.} The Amazon co-purchase networks were
    introduced for controlled GNN evaluation
    \cite{shchur2018pitfalls}. Nodes are products, and two products are linked
    when they are frequently purchased together. Bag-of-words representations
    of product reviews form the node features, while labels correspond to
    product categories. Computers contains 13,752 products in 10 classes;
    Photo contains 7,650 products in 8 classes.

    \item \textbf{Reddit.} Reddit is the large inductive node-classification
    benchmark released with GraphSAGE \cite{hamilton2017graphsage}. A node is a
    Reddit post, and two posts are connected when the same user comments on
    both. Each post has a 602-dimensional feature vector, and its label is the
    community in which it was posted. The processed graph contains 232,965
    nodes, 114,615,892 stored edge entries, and 41 classes. Its scale and dense
    local neighborhoods make it a challenging unseen transfer domain.

    \item \textbf{ogbn-arxiv.} \texttt{ogbn-arxiv} is a directed citation
    network of 169,343 Computer Science papers indexed by the Microsoft
    Academic Graph \cite{hu2020ogb}. Each node has a 128-dimensional feature
    obtained by averaging skip-gram embeddings of words in its title and
    abstract; each directed edge represents a citation. The prediction target
    is one of 40 primary arXiv subject areas. OGB conventionally uses a temporal
    split (training through 2017, validation in 2018, and testing from 2019
    onward), whereas our experiment treats the graph as an unseen domain and
    constructs fixed 5-shot support/query episodes from disjoint labeled nodes.
\end{itemize}

\section{Experimental Details}
\label{app:experimental_details}

This appendix provides additional implementation and evaluation details for
the cross-domain few-shot transfer experiment reported in the main paper. Unless otherwise specified, the same protocol
is used for all graph- and node-classification targets.

\begin{table*}[t]
\centering
\caption{Default settings for the cross-domain few-shot transfer experiment.}
\label{tab:app_default_transfer_settings}
\setlength{\tabcolsep}{8pt}
\begin{tabular}{@{}lll@{}}
\hline
\textbf{Category} & \textbf{Parameter} & \textbf{Default value} \\
\hline
Few-shot evaluation & Support samples per class & 5 \\
Few-shot evaluation & Maximum query samples per class & 50 \\
Few-shot evaluation & Number of episodes per target & 50 \\
Few-shot evaluation & Graph-level downstream head & Prototype classifier~\cite{snell2017prototypical} \\
Few-shot evaluation & Node-level downstream head & Linear classifier \\
Relational atlas & Number of bases $K$ & 16 \\
Relational atlas & Number of roles per base $M$ & 32 \\
Graph preprocessing & Unified node-feature width & 56 \\
Node preprocessing & Projected node-feature width & 256 \\
Node preprocessing & Gaussian projection seed & 42 \\
Node preprocessing & Maximum PPR ego-graph size & 400 nodes \\
Node preprocessing & PPR restart probability $\alpha$ & 0.15 \\
Node preprocessing & Local-push residual threshold & $10^{-4}$ \\
Node preprocessing & Maximum local-push operations per center & $10^6$ \\
Node preprocessing & Maximum center nodes per class & 300 \\
\hline
\end{tabular}
\end{table*}

\subsection{Cross-Domain Few-Shot Setup}

For the five LODO targets at each task level, the target dataset is excluded
from pretraining and the remaining four source datasets are used to learn the
encoder. COLORS-3 and \texttt{ogbg-molhiv} are evaluated using the graph-level
checkpoint pretrained on all five graph source domains, whereas Reddit and
\texttt{ogbn-arxiv} use the node-level checkpoint pretrained on all five node
source domains. The pretrained encoder is frozen throughout downstream
evaluation; only the task-specific downstream head is constructed from
the support set.

For every target dataset, we generate 50 deterministic episodes. Within each
episode, five support samples are drawn without replacement from every class,
followed by at most 50 query samples from the remaining examples of that
class. Thus, the support and query sets are disjoint within an episode, while
examples may be reused across different episodes. Feature standardization is
fitted exclusively on the support embeddings and then applied to the query
embeddings, preventing query statistics from entering downstream training.

For graph classification, class prototypes are computed as the means of the
support embeddings, and each query is assigned to its nearest prototype in
Euclidean distance. For node classification, a linear classifier is fitted on
the support embeddings of each episode. We report the mean and standard
deviation of Accuracy over the 50 episodes.

\subsection{Input Preprocessing}

For graph-classification datasets, intrinsic node attributes and categorical
node labels are concatenated when both are available. Datasets without
intrinsic node features use a constant scalar of one for every node. The
resulting node inputs are zero-padded to a common width of 56 dimensions.
For \texttt{ogbg-molhiv}, the encoders use the nine categorical atom-feature
fields and the unweighted molecular connectivity; categorical bond attributes
are not included in the model input.

For node classification, both source pretraining and target evaluation operate
on PPR ego-graphs~\cite{bojchevski2020ppr} centered at labeled nodes. We use approximate local-push PPR
with restart probability $\alpha=0.15$, residual threshold $10^{-4}$, and at
most $10^6$ push operations per center. Each ego-graph contains at most 400
nodes, and at most 300 center nodes are sampled from each class when building
the reusable graph cache. Because the raw feature dimensions differ across
domains, a deterministic Gaussian random projection with seed 42 maps every
node feature vector to 256 dimensions.

\subsection{Default Configuration}

Table~\ref{tab:app_default_transfer_settings} summarizes the default
settings used in the cross-domain transfer experiment.
SCGFM-ART uses a relational atlas containing $K=16$ bases, with each
base defined over a shared set of $M=32$ relational roles.
Accordingly, the atlas coordinate satisfies $q(G)\in\mathbb{R}^{K}$,
each graph--base coupling satisfies
$\widehat{T}_k\in\mathbb{R}^{N\times M}$, and the transported feature
map satisfies $H(G)\in\mathbb{R}^{M\times d_f}$.
The default values $(K,M)=(16,32)$ are used in the main comparisons
and all analyses unless otherwise stated.
In the atlas-capacity analysis, one of $K$ and $M$ is varied while the
other is fixed at its default value.
For the compared methods, we use the official or recommended
optimization settings whenever available; configurable message-passing
encoders use three layers, as stated in the main paper.

Table~\ref{tab:app_baseline_training_settings} summarizes the pretraining
configurations of the comparison methods. Here, $E$, $B$, $\eta$, and
$\lambda_{\mathrm{wd}}$ denote the number of training epochs, batch size,
learning rate, and weight decay, respectively. The configurations are fixed
across target datasets at each task level.

\begin{table*}[!t]
\centering
\caption{Pretraining configurations of the comparison methods.}
\label{tab:app_baseline_training_settings}
\setlength{\tabcolsep}{3.5pt}
\begin{tabular}{@{}lcccccccc@{}}
\hline
& \multicolumn{4}{c}{\textbf{Graph classification}}
& \multicolumn{4}{c}{\textbf{Node classification}} \\
\textbf{Method}
& $E$ & $B$ & $\eta$ & $\lambda_{\mathrm{wd}}$
& $E$ & $B$ & $\eta$ & $\lambda_{\mathrm{wd}}$ \\
\hline
GCN / GAT / GIN
& 100 & 512 & $1\times10^{-3}$ & 0
& 100 & 256 & $1\times10^{-3}$ & 0 \\

GraphCL
& 100 & 128 & $1\times10^{-3}$ & 0
& 100 & 128 & $1\times10^{-3}$ & 0 \\

GraphMAE
& 100 & 128 & $1\times10^{-3}$ & $5\times10^{-4}$
& 100 & 128 & $1\times10^{-3}$ & $5\times10^{-4}$ \\

GraphACL
& 100 & 128 & $1\times10^{-3}$ & $1\times10^{-6}$
& 100 & 128 & $1\times10^{-3}$ & $5\times10^{-4}$ \\

GraphGlue
& 100 & 128 & $1\times10^{-3}$ & $5\times10^{-4}$
& 100 & 128 & $1\times10^{-3}$ & $5\times10^{-4}$ \\

GCOPE
& 100 & 128 & $1\times10^{-3}$ & $1\times10^{-5}$
& 100 & 128 & $1\times10^{-3}$ & $5\times10^{-4}$ \\

GiT
& 100 & 128 & $1\times10^{-3}$ & $1\times10^{-8}$
& 100 & 128 & $1\times10^{-3}$ & $5\times10^{-4}$ \\

MDGMIX
& 100 & 128 & $1\times10^{-3}$ & $1\times10^{-6}$
& 100 & 128 & $1\times10^{-3}$ & $5\times10^{-4}$ \\

RiemannGFM
& 100 & 64 & $1\times10^{-3}$ & $1\times10^{-6}$
& 100 & 128 & $1\times10^{-3}$ & $5\times10^{-4}$ \\

SCGFM
& 100 & 128 & $2\times10^{-2}$ & 0
& 60 & 64 & $1\times10^{-2}$ & 0 \\
\hline
\end{tabular}
\end{table*}

For graph-level pretraining, GCN, GAT, GIN, and GiT are optimized using
AdamW~\cite{loshchilov2019adamw}, while the remaining comparison methods use Adam~\cite{kingma2015adam}. For node-level
pretraining, GraphCL and SCGFM use Adam, whereas the other comparison
methods use AdamW. No learning-rate scheduler is employed.

The graph-level encoders use a hidden dimension of 64, with GCN, GAT, and
GIN producing 192-dimensional graph representations. At the node level,
GCN, GAT, GIN, GraphMAE, GraphACL, GraphGlue, GCOPE, GiT, and MDGMIX use
a hidden dimension of 128, whereas GraphCL and RiemannGFM use a hidden
dimension of 64.

We use $step=20$ Sinkhorn iterations throughout all experiments.
In practice, this is sufficient for stable marginal projection, with the maximum marginal residual typically below $10^{-7}$ during training.

\subsection{Implementation and Hardware}

The experiments are implemented in Python using PyTorch and PyTorch Geometric
(PyG), with CUDA acceleration for model training and inference. Unless
otherwise stated, each experiment is executed on a single NVIDIA A800 GPU with
80~GB of memory. The host machine is equipped with an Intel(R) Xeon(R) Gold
6348 CPU running at 2.60~GHz.

\section{Supplementary Theoretical Analysis}
\label{app:supp_theory}

\subsection{Relational Structures and Quotient Geometry}
\label{app:supp_relational_geometry}

\subsubsection{Basic definitions}

\begin{definition}[Finite measured relational structure]
A finite measured relational structure is a triplet
\begin{equation}
G=(V,A,\mu_G),
\end{equation}
where $V$ is a finite set,
$A:V\times V\rightarrow[0,1]$ is symmetric with $A(i,i)=0$, and
$\mu_G$ is a probability measure on $V$.
\end{definition}
For two measured relational structures
$G=(V,A,\mu_G)$ and $G'=(V',A',\mu_{G'})$, define the transport polytope
\begin{equation}
\Pi(\mu_G,\mu_{G'})
=
\left\{
T\in\mathbb R_+^{|V|\times|V'|}:
T\mathbf 1=\mu_G,
\;T^\top\mathbf 1=\mu_{G'}
\right\}.
\label{eq:supp_transport_polytope}
\end{equation}
For any $T\in\Pi(\mu_G,\mu_{G'})$, the squared relational distortion is
\begin{equation}
\mathcal E(A,A';T)
=
\sum_{i,j,a,b}
(A_{ij}-A'_{ab})^2T_{ia}T_{jb}.
\label{eq:supp_relational_energy}
\end{equation}
The associated relational discrepancy is
\begin{equation}
d(G,G')
=
\sqrt{
\min_{T\in\Pi(\mu_G,\mu_{G'})}
\mathcal E(A,A';T)
}.
\label{eq:supp_relational_discrepancy}
\end{equation}

\begin{proposition}[Relational graph geometry]
\label{prop:supp_relational_geometry}
The discrepancy $d$ is finite, symmetric, and invariant to node relabeling.
Modulo zero-discrepancy equivalence, it induces a metric on a quotient space
$\mathcal X$ containing both observed graphs and relational bases.
\end{proposition}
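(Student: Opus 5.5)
The plan is to treat $d$ as the (squared-loss, $p=2$) Gromov--Wasserstein distance between finite network-measure spaces and to verify the metric axioms directly, following the standard network-GW arguments of Chowdhury--Mémoli. I will check finiteness, symmetry, and relabeling invariance first, since they are elementary. Then I will establish the triangle inequality by gluing couplings. Finally I will pass to the quotient.

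\emph{Finiteness.} The polytope $\Pi(\mu_G,\mu_{G'})$ is nonempty, because the product coupling $\mu_G\otimes\mu_{G'}$ belongs to it. It is also compact, being a closed and bounded subset of $\mathbb R^{|V|\times|V'|}$. The map $T\mapsto\mathcal E(A,A';T)$ is a quadratic polynomial and hence continuous, so the minimum in Eq.~\eqref{eq:supp_relational_discrepancy} is attained. Since $A,A'$ take values in $[0,1]$, every summand satisfies $(A_{ij}-A'_{ab})^2\le 1$. With total mass $\sum T_{ia}T_{jb}=1$, this gives $0\le d\le 1$.

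\emph{Symmetry and relabeling.} The map $T\mapsto T^\top$ is a bijection from $\Pi(\mu_G,\mu_{G'})$ onto $\Pi(\mu_{G'},\mu_G)$, and it preserves $\mathcal E$ because $(A_{ij}-A'_{ab})^2=(A'_{ab}-A_{ij})^2$. For relabeling, a permutation $P$ transforms $(A,\mu_G)$ into $(PAP^\top,P\mu_G)$, and $T\mapsto PT$ is a bijection between the corresponding polytopes. Reindexing the sum shows that $\mathcal E$ is unchanged, so $d(G,G')=d(PG,G')$.

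\emph{Triangle inequality.} This is the main step. Take three structures $G_1,G_2,G_3$, and let $T^{12}$ and $T^{23}$ be optimal couplings. I will glue them into a three-way measure
\begin{equation}
\pi_{iam}=\frac{T^{12}_{ia}T^{23}_{am}}{\mu_2(a)}.
\end{equation}
This expression requires $\mu_2>0$, which holds for the degree-smoothed and uniform measures. In general, one restricts to the support of $\mu_2$. The $(1,3)$-marginal of $\pi$, call it $T^{13}$, lies in $\Pi(\mu_1,\mu_3)$, while its $(1,2)$- and $(2,3)$-marginals are $T^{12}$ and $T^{23}$. Expanding the energy of $T^{13}$ against the product measure $\pi\otimes\pi$ gives
\begin{equation}
\mathcal E(A_1,A_3;T^{13})^{1/2}=\Bigl\|A_1(i,j)-A_3(m,n)\Bigr\|_{L^2(\pi\otimes\pi)}.
\end{equation}
I then insert $A_2(a,b)$ inside the norm and apply Minkowski's inequality in $L^2(\pi\otimes\pi)$. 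The first resulting term depends only on the indices $(i,a,j,b)$, so its norm reduces to $\mathcal E(A_1,A_2;T^{12})^{1/2}$. Similarly, the second term reduces to $\mathcal E(A_2,A_3;T^{23})^{1/2}$. Together these give $d(G_1,G_3)\le d(G_1,G_2)+d(G_2,G_3)$. The main care lies in the marginal bookkeeping and in the zero-mass case.

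\emph{Quotient.} Define $G\sim G'$ whenever $d(G,G')=0$. By symmetry and the triangle inequality, $\sim$ is an equivalence relation. The triangle inequality also shows that $d$ does not depend on the choice of class representatives. Since $d(G,G)=0$ is witnessed by the diagonal coupling $\mathrm{diag}(\mu_G)$, $d$ descends to a genuine metric on the quotient space $\mathcal X$. This space includes both observed graphs and bases, because both are finite measured relational structures in the sense of Definition~\ref{def:relation_structure}.
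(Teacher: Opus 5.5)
Your proof is correct. For finiteness, symmetry, relabeling invariance and the passage to the quotient, it matches the paper step for step: the bound $0\le\mathcal E\le 1$ from unit total mass, transposition of couplings, reindexing under $T\mapsto PT$ (the paper uses the two-sided $PTP'^\top$), and independence of representatives via the triangle inequality.

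The real difference is the triangle inequality. The paper does not prove it; it cites the order-$2$ network Gromov--Wasserstein result of Chowdhury--M\'emoli. You prove it directly by gluing $T^{12}$ and $T^{23}$ through $\pi_{iam}=T^{12}_{ia}T^{23}_{am}/\mu_2(a)$ and applying Minkowski's inequality in $L^2(\pi\otimes\pi)$. Your marginal bookkeeping checks out, and setting $\pi_{iam}=0$ on $\{\mu_2(a)=0\}$ correctly handles zero-mass roles.

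Your route buys the following:
\begin{itemize}
\item It is self-contained.
\item It shows that only boundedness and the $L^2$ structure of the energy are used, with no triangle inequality on the relations themselves. This matters here, because binary adjacencies and the sigmoid kernel bases $B_k$ are not metrics.
\item It supplies two details the paper leaves implicit: attainment of the minimum by compactness of $\Pi(\mu_G,\mu_{G'})$, which justifies writing $\min$ and taking optimal $T^{12},T^{23}$; and $d(G,G)=0$ via the diagonal coupling, which is needed for reflexivity of $\sim_d$.
\end{itemize}

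The citation buys a stronger structural fact, which the paper mentions: $d$ vanishes exactly on weak-isomorphism classes, so one knows what the quotient collapses. Since the proposition defines $\mathcal X$ directly by $d=0$, you do not need that characterization.
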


\begin{proof}
We first establish boundedness. Because
$A_{ij},A'_{ab}\in[0,1]$, for every feasible $T$,
\begin{align}
0
\le
\mathcal E(A,A';T)
&=
\sum_{i,j,a,b}
(A_{ij}-A'_{ab})^2T_{ia}T_{jb}
\nonumber\\
&\le
\sum_{i,j,a,b}T_{ia}T_{jb}.
\end{align}
Since $T$ has unit total mass,
\begin{equation}
\sum_{i,a}T_{ia}=1,
\end{equation}
and therefore
\begin{equation}
\sum_{i,j,a,b}T_{ia}T_{jb}
=
\left(\sum_{i,a}T_{ia}\right)^2
=1.
\end{equation}
Hence
\begin{equation}
0\le d(G,G')\le 1.
\label{eq:supp_d_bounded}
\end{equation}

Symmetry follows by transposing the coupling. Specifically,
$T\in\Pi(\mu_G,\mu_{G'})$ if and only if
$T^\top\in\Pi(\mu_{G'},\mu_G)$, and direct reindexing gives
\begin{equation}
\mathcal E(A,A';T)
=
\mathcal E(A',A;T^\top).
\end{equation}
Taking minima over the corresponding feasible sets yields
$d(G,G')=d(G',G)$.

We next prove invariance to node relabeling. Let $P$ and $P'$ be
permutation matrices acting on $V$ and $V'$, respectively, and define
\begin{equation}
\widetilde A=PAP^\top,
\qquad
\widetilde A'=P'A'P'^\top.
\end{equation}
The map
\begin{equation}
T\longmapsto \widetilde T=PTP'^\top
\end{equation}
is a bijection from
$\Pi(\mu_G,\mu_{G'})$ to
$\Pi(P\mu_G,P'\mu_{G'})$.
Moreover, reindexing the summation in
Eq.~\eqref{eq:supp_relational_energy} gives
\begin{equation}
\mathcal E(\widetilde A,\widetilde A';\widetilde T)
=
\mathcal E(A,A';T).
\end{equation}
Taking minima therefore preserves the discrepancy.

The remaining metric structure follows from the standard order-$2$ network Gromov--Wasserstein result for finite measure networks
\cite{chowdhury2019networkgw}. In particular, the discrepancy satisfies the triangle inequality and vanishes exactly on weak-isomorphism classes.
Define
\begin{equation}
G\sim_d G'
\quad\Longleftrightarrow\quad
 d(G,G')=0
\end{equation}
and
\begin{equation}
\mathcal X=\mathfrak G/\!\sim_d.
\end{equation}
If $G\sim_d\widetilde G$ and
$G'\sim_d\widetilde G'$, the triangle inequality gives
\begin{equation}
\left|
 d(G,G')-d(\widetilde G,\widetilde G')
\right|
\le
 d(G,\widetilde G)+d(G',\widetilde G')=0.
\end{equation}
Thus the distance between equivalence classes is independent of the
representatives. After quotienting out zero-discrepancy pairs, identity of
indiscernibles holds and the other metric axioms are inherited from $d$.
\end{proof}

\subsubsection{Finite relational atlas}

Let $\mathcal X_0=\operatorname{supp}(\mathbb P)\subseteq\mathcal X$ denote
the support of the task distribution. We assume that
$(\mathcal X_0,d)$ is totally bounded. Thus, for every $\epsilon>0$ there
exists a finite collection of relational structures whose $\epsilon$-balls
cover $\mathcal X_0$.

\begin{definition}[Finite relational atlas]
A $K$-base relational atlas is
\begin{equation}
\mathcal A=\{\beta_1,\ldots,\beta_K\}\subseteq\mathcal X.
\end{equation}
Its covering radius over $\mathcal X_0$ is
\begin{equation}
\epsilon_{\mathcal A}
:=
\sup_{G\in\mathcal X_0}
\min_{1\le k\le K}d(G,\beta_k),
\label{eq:supp_atlas_radius}
\end{equation}
and its ideal relational coordinate is
\begin{equation}
q_{\mathcal A}^*(G)
:=
\bigl(d(G,\beta_k)\bigr)_{k=1}^K.
\label{eq:supp_ideal_coordinate}
\end{equation}
\end{definition}

Total boundedness guarantees that finite relational atlases exist at every
positive resolution. The following result quantifies the information retained
by a realized atlas.

\begin{theorem}[Finite-atlas coordinate fidelity]
\label{thm:supp_atlas_fidelity}
Let $\mathcal A$ be a finite relational atlas with covering radius
$\epsilon_{\mathcal A}$ over $\mathcal X_0$. Then, for any
$G,G'\in\mathcal X_0$,
\begin{equation}
 d(G,G')-2\epsilon_{\mathcal A}
\le
\left\|
q_{\mathcal A}^*(G)-q_{\mathcal A}^*(G')
\right\|_\infty
\le
 d(G,G').
\label{eq:supp_atlas_fidelity}
\end{equation}
\end{theorem}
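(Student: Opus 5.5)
The plan is to use only the metric structure from Proposition~\ref{prop:supp_relational_geometry}, namely symmetry and the triangle inequality of $d$ on the quotient space $\mathcal X$. Both observed graphs and atlas bases live in $\mathcal X$, so $d(G,\beta_k)$ is well defined on equivalence classes and the triangle inequality applies to any triple drawn from $\mathcal X_0\cup\mathcal A$. The two inequalities in Eq.~\eqref{eq:supp_atlas_fidelity} will be handled separately.

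\textbf{Upper bound.} Fix $k$. The triangle inequality gives
\begin{equation}
d(G,\beta_k)\le d(G,G')+d(G',\beta_k),
\end{equation}
and the same holds with $G$ and $G'$ exchanged. Hence
\begin{equation}
|d(G,\beta_k)-d(G',\beta_k)|\le d(G,G').
\end{equation}
This says that each coordinate map $G\mapsto d(G,\beta_k)$ is $1$-Lipschitz. Taking the maximum over $k$ yields $\|q^*_{\mathcal A}(G)-q^*_{\mathcal A}(G')\|_\infty\le d(G,G')$, which is the non-expansive half of the statement.

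\textbf{Lower bound.} Choose $k_0\in\arg\min_k d(G,\beta_k)$. The minimum is attained because $K$ is finite, and the definition of the covering radius in Eq.~\eqref{eq:supp_atlas_radius} gives $d(G,\beta_{k_0})\le\epsilon_{\mathcal A}$. Routing through $\beta_{k_0}$ gives
\begin{equation}
d(G,G')\le d(G,\beta_{k_0})+d(\beta_{k_0},G').
\end{equation}
Next write $d(G',\beta_{k_0})=d(G,\beta_{k_0})+\bigl(d(G',\beta_{k_0})-d(G,\beta_{k_0})\bigr)$. The bracketed difference is at most $\|q^*_{\mathcal A}(G)-q^*_{\mathcal A}(G')\|_\infty$. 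Substituting, we get $d(G,G')\le 2\,d(G,\beta_{k_0})+\|q^*_{\mathcal A}(G)-q^*_{\mathcal A}(G')\|_\infty\le 2\epsilon_{\mathcal A}+\|q^*_{\mathcal A}(G)-q^*_{\mathcal A}(G')\|_\infty$. Rearranging gives the claimed lower bound.

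There is no serious obstacle here. The only point requiring care is that the triangle inequality for $d$ is not proved from scratch but imported from the network GW result cited in Proposition~\ref{prop:supp_relational_geometry}. It must hold for mixed triples consisting of a graph and a base, which holds because both are finite measured relational structures with entries in $[0,1]$. The supremum in $\epsilon_{\mathcal A}$ is taken over $G\in\mathcal X_0$, which is exactly where $G$ lives, so the pointwise bound $\min_k d(G,\beta_k)\le\epsilon_{\mathcal A}$ is immediate.
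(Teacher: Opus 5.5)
Your proof is correct and follows the paper's argument. The upper bound comes from the triangle inequality, which makes each coordinate $d(\cdot,\beta_k)$ 1-Lipschitz. The lower bound comes from routing through a base $\beta_{k_0}$ nearest to $G$, with $d(G,\beta_{k_0})\le\epsilon_{\mathcal A}$; you rearrange the same two triangle-inequality steps the paper uses.
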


\begin{proof}
For any atlas element $\beta_k$, the triangle inequality gives
\begin{equation}
 d(G,\beta_k)
\le
 d(G,G')+d(G',\beta_k).
\end{equation}
Exchanging $G$ and $G'$ yields
\begin{equation}
\left|
 d(G,\beta_k)-d(G',\beta_k)
\right|
\le
 d(G,G').
\end{equation}
Taking the maximum over $k$ gives
\begin{equation}
\left\|
q_{\mathcal A}^*(G)-q_{\mathcal A}^*(G')
\right\|_\infty
\le
 d(G,G').
\label{eq:supp_atlas_upper}
\end{equation}

For the lower bound, choose an index $r$ such that
\begin{equation}
 d(G,\beta_r)
=
\min_k d(G,\beta_k)
\le
\epsilon_{\mathcal A}.
\end{equation}
Again by the triangle inequality,
\begin{equation}
 d(G',\beta_r)
\ge
 d(G,G')-d(G,\beta_r)
\ge
 d(G,G')-\epsilon_{\mathcal A}.
\end{equation}
Therefore,
\begin{align}
\left|
 d(G',\beta_r)-d(G,\beta_r)
\right|
&\ge
 d(G',\beta_r)-d(G,\beta_r)
\nonumber\\
&\ge
 d(G,G')-2\epsilon_{\mathcal A}.
\end{align}
Since the $\ell_\infty$ norm is at least the magnitude of any individual
coordinate,
\begin{equation}
\left\|
q_{\mathcal A}^*(G)-q_{\mathcal A}^*(G')
\right\|_\infty
\ge
 d(G,G')-2\epsilon_{\mathcal A}.
\label{eq:supp_atlas_lower}
\end{equation}
Combining Eqs.~\eqref{eq:supp_atlas_upper} and
\eqref{eq:supp_atlas_lower} proves the theorem.
\end{proof}

\subsection{Theoretical Properties of Amortized Relational Transport}
\label{app:supp_art_properties}

For the $k$-th relational base
$\beta_k=([M],B_k,\nu)$, ART produces proposal logits
$L_k\in\mathbb R^{N\times M}$ and applies Sinkhorn normalization with
marginals $\mu_G$ and $\nu$. The theoretical operator below denotes the
converged Sinkhorn projection; a fixed-iteration implementation can be
quantified by its marginal residual.

\begin{lemma}[Feasibility and permutation equivariance of ART]
\label{lem:supp_art_equivariance}
Assume that the proposal kernel and both marginals are strictly positive.
Then the Sinkhorn limit $\widehat T_k$ satisfies
\begin{equation}
\widehat T_k\mathbf 1_M=\mu_G,
\qquad
\widehat T_k^\top\mathbf 1_N=\nu.
\label{eq:supp_sinkhorn_marginals}
\end{equation}
Moreover, for any permutation matrix $P$ acting on the input nodes,
\begin{equation}
\widehat T_k(PAP^\top,P\mu_G)
=
P\widehat T_k(A,\mu_G).
\label{eq:supp_art_equivariance}
\end{equation}
Consequently, the relational energy induced by ART is invariant to input-node
relabeling.
\end{lemma}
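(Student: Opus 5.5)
The plan is to split the lemma into three parts: feasibility via Sinkhorn's theorem, equivariance via equivariance of every pipeline component plus uniqueness of the Sinkhorn scaling, and invariance of the energy by reindexing as in Proposition~\ref{prop:supp_relational_geometry}.

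\textbf{Feasibility.} Write the proposal kernel as $K_k=\exp(L_k)$ (or $\exp(L_k/\varepsilon)$ in the log-domain form). Every entry is strictly positive, and $\mu_G$ and $\nu$ are strictly positive with equal unit mass. Sinkhorn's theorem (Sinkhorn--Knopp, or Franklin--Lorenz via the Hilbert projective metric) then gives:
\begin{itemize}
\item positive vectors $u\in\mathbb R^N$ and $v\in\mathbb R^M$, unique up to $u\mapsto cu$, $v\mapsto v/c$;
\item a coupling $\widehat T_k=\operatorname{diag}(u)K_k\operatorname{diag}(v)$ that is unique;
\item convergence of the alternating row and column normalizations to this coupling.
\end{itemize}
The limit therefore satisfies both marginal constraints exactly. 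The degree-smoothing constant $\epsilon_u>0$ is what makes $\mu_G$ strictly positive, and this should be noted explicitly.

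\textbf{Equivariance.} Relabel the input by $A\mapsto PAP^\top$ and $\mu_G\mapsto P\mu_G$, and trace this through each component.
\begin{itemize}
\item The degree vector becomes $P\deg$, and $\max_j\deg(j)$ is unchanged, so $X_s\mapsto PX_s$.
\item Each GIN layer has the form $h\mapsto \mathrm{MLP}((1+\epsilon)h+Ah)$. Its MLP acts row-wise and $(PAP^\top)(Ph)=PAh$, so induction over the two layers gives $U\mapsto PU$.
\item The base side $R_k$ does not depend on the graph, so $L_k\mapsto PL_k$ and $K_k\mapsto PK_k$, since $\exp$ acts entrywise.
\end{itemize}
For the Sinkhorn step, take the scaling $(u,v)$ for $(K_k,\mu_G,\nu)$. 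Then $(Pu,v)$ satisfies the scaling equations for $(PK_k,P\mu_G,\nu)$:
\begin{itemize}
\item rows: $\operatorname{diag}(Pu)PK_k v=P\,\operatorname{diag}(u)K_kv=P\mu_G$;
\item columns: the column sums are unchanged, because $P^\top P=I$.
\end{itemize}
By uniqueness of the Sinkhorn coupling, $\widehat T_k(PAP^\top,P\mu_G)=P\operatorname{diag}(u)K_k\operatorname{diag}(v)=P\widehat T_k(A,\mu_G)$. One can also argue at the level of iterates: each row or column normalization commutes with row permutation, so every finite-iteration output is equivariant as well. This also covers the fixed $S=20$ implementation.

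\textbf{Invariance of the energy.} Substitute $\widetilde A=PAP^\top$, $\widetilde T=P\widehat T_k$, and the base unchanged ($P'=I$) into Eq.~\eqref{eq:supp_relational_energy}. Reindexing $i,j$ through $P$ gives $\mathcal E(\widetilde A,B_k;\widetilde T)=\mathcal E(A,B_k;\widehat T_k)$, exactly as in the proof of Proposition~\ref{prop:supp_relational_geometry}.

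\textbf{Main obstacle.} The delicate step is the appeal to uniqueness and convergence of Sinkhorn, which is the only place strict positivity is used. I would cite the classical theorem rather than reprove it. If needed, I would sketch it via contraction of the alternating scaling maps in Hilbert's projective metric, which follows from Birkhoff's theorem for positive matrices.
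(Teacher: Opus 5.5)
Your proposal is correct and follows the paper's proof essentially step for step. Feasibility comes from the classical Sinkhorn scaling theorem for a strictly positive kernel, equivariance from tracing $X_s\mapsto PX_s$, $U\mapsto PU$, $L_k\mapsto PL_k$ through the pipeline, and energy invariance from reindexing. The only nuance is how equivariance reaches the limit: you use uniqueness of the scaled coupling, while the paper argues that at every iterate the graph-side scaling vectors are permuted by $P$ and the base-side ones are unchanged. You also mention this iterate-level route, and it has the advantage of covering the fixed $S=20$ implementation without any limit or uniqueness argument.
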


\begin{proof}
Let the strictly positive Sinkhorn kernel be
\begin{equation}
\Xi_{ia}
=
\exp\left(\frac{L_k(i,a)}{\tau_{\mathrm{sk}}}\right)>0.
\end{equation}
Sinkhorn scaling constructs matrices of the form
\begin{equation}
T=\operatorname{diag}(u)\Xi\operatorname{diag}(v),
\end{equation}
with alternating updates
\begin{equation}
u\leftarrow\mu_G\oslash(\Xi v),
\qquad
v\leftarrow\nu\oslash(\Xi^\top u),
\end{equation}
where $\oslash$ denotes element-wise division. For a strictly positive
kernel and strictly positive marginals of equal total mass, the classical
Sinkhorn scaling theorem gives convergence to a matrix satisfying the desired
row and column marginals. Hence
Eq.~\eqref{eq:supp_sinkhorn_marginals} holds and
$\widehat T_k\in\Pi(\mu_G,\nu)$.

We next establish equivariance. Under the node relabeling
\begin{equation}
A'=PAP^\top,
\qquad
\mu_G'=P\mu_G,
\end{equation}
the normalized degree feature transforms as $X_s'=PX_s$. The shared GNN encoder is permutation equivariant, so
\begin{equation}
U'=PU.
\end{equation}
The base-role embeddings are unchanged because the permutation acts only on
the input graph. Therefore, the compatibility logits and positive kernel obey
\begin{equation}
L_k'=PL_k,
\qquad
\Xi_k'=P\Xi_k.
\end{equation}
Sinkhorn normalization consists only of graph-side row scaling and base-side
column scaling. Consequently, every graph-side scaling vector is permuted by
$P$, while the base-side scaling vector is unchanged. Thus
\begin{equation}
\widehat T_k'=P\widehat T_k,
\end{equation}
which proves Eq.~\eqref{eq:supp_art_equivariance}.

Finally, substituting
$A'=PAP^\top$ and
$\widehat T_k'=P\widehat T_k$ into
Eq.~\eqref{eq:supp_relational_energy}, followed by reindexing of the graph
nodes, gives
\begin{equation}
\mathcal E(A',B_k;\widehat T_k')
=
\mathcal E(A,B_k;\widehat T_k).
\end{equation}
Hence the graph-level ART energy is invariant to node relabeling.
\end{proof}

For a finite number of Sinkhorn iterations, feasibility can be monitored by
\begin{equation}
\operatorname{Res}_{\mathrm{marg}}(T)
=
\max\left\{
\|T\mathbf 1_M-\mu_G\|_\infty,
\|T^\top\mathbf 1_N-\nu\|_\infty
\right\}.
\label{eq:supp_marginal_residual}
\end{equation}

\subsubsection{Fast relational-energy identity}

\begin{proposition}[Fast relational-energy identity]
\label{prop:supp_fast_energy}
For any feasible coupling $T\in\Pi(\mu_G,\nu)$ and symmetric relations
$A$ and $B$,
\begin{equation}
\mathcal E(A,B;T)
=
\chi(A)+\chi(B)-2\langle AT,TB\rangle_F,
\label{eq:supp_fast_energy}
\end{equation}
where
\begin{equation}
\chi(A)=\sum_{i,j}A_{ij}^2\mu_G(i)\mu_G(j),
\qquad
\chi(B)=\sum_{a,b}B_{ab}^2\nu_a\nu_b.
\end{equation}
\end{proposition}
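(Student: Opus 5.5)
The plan is to expand the square inside the four-index sum of Eq.~\eqref{eq:supp_relational_energy} and treat each of the three resulting terms separately, using only the marginal constraints of $T\in\Pi(\mu_G,\nu)$ and the symmetry of $A$ and $B$. Writing $(A_{ij}-B_{ab})^2=A_{ij}^2+B_{ab}^2-2A_{ij}B_{ab}$, we obtain
\begin{equation*}
\mathcal E(A,B;T)=\sum_{i,j,a,b}A_{ij}^2T_{ia}T_{jb}+\sum_{i,j,a,b}B_{ab}^2T_{ia}T_{jb}-2\sum_{i,j,a,b}A_{ij}B_{ab}T_{ia}T_{jb}.
\end{equation*}

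For the first term, sum over $a$ and $b$ first. The row-marginal identity $\sum_a T_{ia}=\mu_G(i)$ collapses it to $\sum_{i,j}A_{ij}^2\mu_G(i)\mu_G(j)=\chi(A)$. For the second term, sum over $i$ and $j$ first. The column-marginal identity $\sum_i T_{ia}=\nu_a$ gives $\sum_{a,b}B_{ab}^2\nu_a\nu_b=\chi(B)$. Both steps need only that the sums are finite, so the order of summation may be exchanged.

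The cross term is the only step requiring care with indices. Group it as $\sum_{j,a}\bigl(\sum_i A_{ij}T_{ia}\bigr)\bigl(\sum_b T_{jb}B_{ab}\bigr)$. Since $A$ is symmetric, the first factor is $\sum_i A_{ji}T_{ia}=(AT)_{ja}$. Since $B$ is symmetric, the second factor is $\sum_b T_{jb}B_{ba}=(TB)_{ja}$. Hence the cross term equals $\sum_{j,a}(AT)_{ja}(TB)_{ja}=\langle AT,TB\rangle_F$. Substituting the three pieces back yields Eq.~\eqref{eq:supp_fast_energy}.

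The main obstacle is only bookkeeping in this last step: making sure the transposes line up correctly. Symmetry of $A$ and $B$ is exactly what lets us avoid writing $A^\top$ or $B^\top$, and both are symmetric by Definition~\ref{def:relation_structure} and by construction of $B_k$. Feasibility of $T$ is used only for the two marginal collapses; the cross-term identity holds for any $T$.
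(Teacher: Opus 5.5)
Your proposal is correct and follows the paper's proof essentially step for step. You expand $(A_{ij}-B_{ab})^2$, collapse the $A_{ij}^2$ and $B_{ab}^2$ terms using the row and column marginals of $T$, and identify the cross term with $\operatorname{tr}(T^\top ATB)=\langle AT,TB\rangle_F$ via symmetry; your explicit grouping $\sum_{j,a}(AT)_{ja}(TB)_{ja}$ checks this cleanly (incidentally, symmetry is not even required there, since swapping $i\leftrightarrow j$ and $a\leftrightarrow b$ in $\sum_{i,j,a,b}A_{ji}B_{ba}T_{ia}T_{jb}=\langle AT,TB\rangle_F$ recovers the original sum).
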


\begin{proof}
Expanding the square gives
\begin{equation}
(A_{ij}-B_{ab})^2
=
A_{ij}^2+B_{ab}^2-2A_{ij}B_{ab}.
\end{equation}
For the first term, feasibility gives
\begin{align}
\sum_{i,j,a,b}A_{ij}^2T_{ia}T_{jb}
&=
\sum_{i,j}A_{ij}^2
\left(\sum_aT_{ia}\right)
\left(\sum_bT_{jb}\right)
\nonumber\\
&=
\sum_{i,j}A_{ij}^2\mu_G(i)\mu_G(j)
=
\chi(A).
\end{align}
Similarly,
\begin{align}
\sum_{i,j,a,b}B_{ab}^2T_{ia}T_{jb}
&=
\sum_{a,b}B_{ab}^2
\left(\sum_iT_{ia}\right)
\left(\sum_jT_{jb}\right)
\nonumber\\
&=
\sum_{a,b}B_{ab}^2\nu_a\nu_b
=
\chi(B).
\end{align}
For the cross term, symmetry of $A$ and $B$ yields
\begin{align}
\sum_{i,j,a,b}A_{ij}B_{ab}T_{ia}T_{jb}
&=
\operatorname{tr}(T^\top A T B)
\nonumber\\
&=
\langle AT,TB\rangle_F.
\end{align}
Combining the three terms proves
Eq.~\eqref{eq:supp_fast_energy}.
\end{proof}

For a loop-free binary adjacency relation,
$A_{ij}^2=A_{ij}$ and the graph-side quantities admit the exact sparse forms
\begin{equation}
\chi(A)
=
\sum_{(i,j)\in E}\mu_G(i)\mu_G(j),
\label{eq:supp_sparse_chi}
\end{equation}
and
\begin{equation}
(AT)(i,a)
=
\sum_{j:(i,j)\in E}T(j,a).
\label{eq:supp_sparse_AT}
\end{equation}
Thus, the energy can be evaluated without materializing an $N\times N$
dense relation matrix.

\subsubsection{Amortized atlas coordinate fidelity}

For each base, define the ideal and amortized energies
\begin{align}
e_k^*(G)
&:=
\min_{T\in\Pi(\mu_G,\nu)}
\mathcal E(A,B_k;T)
=d^2(G,\beta_k),
\\
e_k(G)
&:=
\mathcal E(A,B_k;\widehat T_k)
\ge e_k^*(G),
\qquad
q_k(G):=\sqrt{e_k(G)}.
\end{align}
Let
\begin{equation}
q(G):=(q_k(G))_{k=1}^K,
\qquad
\eta(G)
:=
\|q(G)-q_{\mathcal A}^*(G)\|_\infty.
\label{eq:supp_eta}
\end{equation}

\begin{corollary}[Amortized atlas coordinate fidelity]
\label{cor:supp_amortized_coordinate}
Let $\mathcal A$ be a finite relational atlas with covering radius
$\epsilon_{\mathcal A}$ over $\mathcal X_0$. For any
$G,G'\in\mathcal X_0$,
\begin{equation}
\begin{aligned}
 d(G,G')
-2\epsilon_{\mathcal A}
-\eta(G)-\eta(G')
&\le
\|q(G)-q(G')\|_\infty
\\
&\le
 d(G,G')+\eta(G)+\eta(G').
\end{aligned}
\label{eq:supp_amortized_coordinate_bound}
\end{equation}
\end{corollary}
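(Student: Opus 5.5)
The plan is to reduce Corollary~\ref{cor:supp_amortized_coordinate} to Theorem~\ref{thm:supp_atlas_fidelity} with a perturbation argument in the $\ell_\infty$ norm. The only new ingredient is the definition of $\eta(G)$ in Eq.~\eqref{eq:supp_eta}. It states that the amortized coordinate $q(G)$ lies within $\ell_\infty$-distance $\eta(G)$ of the ideal coordinate $q_{\mathcal A}^*(G)$, and likewise for $G'$. No property of the ART couplings beyond this definition is needed. In particular, the inequality $e_k(G)\ge e_k^*(G)$ and the feasibility from Lemma~\ref{lem:supp_art_equivariance} only guarantee that $q_k(G)$ is well defined and real.

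\textbf{Upper bound.} I will write
\begin{equation}
q(G)-q(G')=\bigl[q(G)-q_{\mathcal A}^*(G)\bigr]+\bigl[q_{\mathcal A}^*(G)-q_{\mathcal A}^*(G')\bigr]+\bigl[q_{\mathcal A}^*(G')-q(G')\bigr].
\end{equation}
Applying the triangle inequality for $\|\cdot\|_\infty$ gives
\begin{equation}
\|q(G)-q(G')\|_\infty\le \eta(G)+\|q_{\mathcal A}^*(G)-q_{\mathcal A}^*(G')\|_\infty+\eta(G').
\end{equation}
The middle term is then bounded by $d(G,G')$, using the upper half of Theorem~\ref{thm:supp_atlas_fidelity}.

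\textbf{Lower bound.} I will use the reverse decomposition
\begin{equation}
\|q_{\mathcal A}^*(G)-q_{\mathcal A}^*(G')\|_\infty\le \eta(G)+\|q(G)-q(G')\|_\infty+\eta(G').
\end{equation}
Rearranging isolates $\|q(G)-q(G')\|_\infty$. Lower-bounding the left side by $d(G,G')-2\epsilon_{\mathcal A}$, via the lower half of Theorem~\ref{thm:supp_atlas_fidelity}, then yields the claimed inequality.

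\textbf{Main point to check.} The only delicate step is confirming that Theorem~\ref{thm:supp_atlas_fidelity} applies. This requires $G,G'\in\mathcal X_0$ and $\epsilon_{\mathcal A}$ to be the covering radius over $\mathcal X_0$, both of which are hypotheses of the corollary. One should also note that $\eta(G)$ is finite, since every coordinate lies in $[0,1]$ by Eq.~\eqref{eq:supp_d_bounded} and the same energy bound applied to $\widehat T_k$. Otherwise the argument is routine.
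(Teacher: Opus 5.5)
Your proposal is correct and follows essentially the same route as the paper. The paper also splits $q(G)-q(G')$ through $q_{\mathcal A}^*(G)$ and $q_{\mathcal A}^*(G')$, uses the $\ell_\infty$ triangle inequality for the upper bound and its reverse form for the lower bound, and then invokes the two halves of Theorem~\ref{thm:supp_atlas_fidelity}, with only the definition of $\eta$ needed beyond that.
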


\begin{proof}
By the triangle inequality,
\begin{align}
\|q(G)-q(G')\|_\infty
&\le
\|q(G)-q_{\mathcal A}^*(G)\|_\infty
+
\|q_{\mathcal A}^*(G)-q_{\mathcal A}^*(G')\|_\infty
\nonumber\\
&\quad+
\|q_{\mathcal A}^*(G')-q(G')\|_\infty.
\end{align}
By the definition of $\eta(\cdot)$ and
Theorem~\ref{thm:supp_atlas_fidelity},
\begin{equation}
\|q(G)-q(G')\|_\infty
\le
 d(G,G')+\eta(G)+\eta(G').
\label{eq:supp_amortized_upper}
\end{equation}

Conversely, applying the triangle inequality to
$q_{\mathcal A}^*(G)-q_{\mathcal A}^*(G')$ gives
\begin{align}
\|q(G)-q(G')\|_\infty
&\ge
\|q_{\mathcal A}^*(G)-q_{\mathcal A}^*(G')\|_\infty
\nonumber\\
&\quad-
\|q(G)-q_{\mathcal A}^*(G)\|_\infty
-
\|q(G')-q_{\mathcal A}^*(G')\|_\infty.
\end{align}
Again using Theorem~\ref{thm:supp_atlas_fidelity},
\begin{equation}
\|q(G)-q(G')\|_\infty
\ge
 d(G,G')
-2\epsilon_{\mathcal A}
-\eta(G)
-\eta(G').
\label{eq:supp_amortized_lower}
\end{equation}
Combining Eqs.~\eqref{eq:supp_amortized_upper} and
\eqref{eq:supp_amortized_lower} proves the result.
\end{proof}

The bound separates two sources of distortion: the finite-atlas resolution
$\epsilon_{\mathcal A}$ and the coordinate approximation error $\eta(G)$
introduced by amortized transport.

\subsection{Amortized Relational Coverage}
\label{app:supp_coverage}

For a graph $G$ and atlas $\mathcal A$, the normalized soft-min coverage is
\begin{equation}
\ell_{\mathrm{mean}}(G)
=
-\tau_m\log\left[
\frac1K\sum_{k=1}^K
\exp\left(-\frac{e_k(G)}{\tau_m}\right)
\right].
\label{eq:supp_mean_coverage}
\end{equation}
We first record the elementary soft-min bound used by the main theorem.

\begin{lemma}[Normalized soft-min bound]
\label{lem:supp_softmin}
For arbitrary nonnegative energies $e_1,\ldots,e_K$ and $\tau_m>0$,
\begin{equation}
\min_k e_k
\le
-\tau_m\log\left[
\frac1K\sum_{k=1}^K e^{-e_k/\tau_m}
\right]
\le
\min_k e_k+\tau_m\log K.
\label{eq:supp_softmin_bound}
\end{equation}
\end{lemma}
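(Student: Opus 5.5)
The plan is to sandwich the normalized sum $S:=\frac1K\sum_{k=1}^K e^{-e_k/\tau_m}$ between two explicit quantities. We then pass the bounds through the map $s\mapsto-\tau_m\log s$. Since $\tau_m>0$, this map is strictly decreasing on $(0,\infty)$, so upper bounds on $S$ become lower bounds on the soft-min and vice versa. Throughout, write $m:=\min_k e_k$ and fix an index $r$ attaining it, so $e_r=m$. Every summand is strictly positive, hence $S>0$ and the logarithm is well defined. Nonnegativity of the energies is not actually needed beyond the finiteness of each $e_k$.

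\textbf{Lower bound on the soft-min.} For every $k$ we have $e_k\ge m$, hence $e^{-e_k/\tau_m}\le e^{-m/\tau_m}$. Averaging over $k$ gives $S\le e^{-m/\tau_m}$. Applying the decreasing map $-\tau_m\log(\cdot)$ then yields $-\tau_m\log S\ge m$, which is the left inequality.

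\textbf{Upper bound on the soft-min.} Drop every summand except the $r$-th one. Because all terms are positive, this gives $S\ge \frac1K e^{-m/\tau_m}$. Applying the same decreasing map gives
\begin{equation}
-\tau_m\log S\le -\tau_m\log\!\left(\tfrac1K e^{-m/\tau_m}\right)=m+\tau_m\log K,
\end{equation}
which is the right inequality.

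There is no genuine obstacle in this argument. The only point requiring care is the direction of the inequalities under the decreasing logarithmic map, together with keeping the normalization $\frac1K$ inside the logarithm. That normalization is exactly what makes the lower bound $m$ hold without an additive offset, whereas an unnormalized log-sum-exp would instead shift both bounds by $-\tau_m\log K$. This lemma then feeds directly into Theorem~\ref{thm:amortized_coverage}, taking $e_k=e_k(G)$ and using $e_k(G)\ge e_k^*(G)$ together with the definition of $\Delta(G)$.
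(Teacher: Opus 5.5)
Your proposal is correct and follows the paper's proof: it bounds the normalized average above by $e^{-m/\tau_m}$ for the lower inequality and below by $\frac1K e^{-m/\tau_m}$, keeping only the minimizing term, for the upper inequality, then applies the decreasing map $-\tau_m\log(\cdot)$. Your side remark that nonnegativity of the $e_k$ is not needed is also accurate.
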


\begin{proof}
Let $m=\min_k e_k$. Since $e_k\ge m$,
\begin{equation}
\frac1K\sum_{k=1}^K e^{-e_k/\tau_m}
\le
 e^{-m/\tau_m}.
\end{equation}
Applying $-\tau_m\log(\cdot)$ gives the lower bound. If $j$ satisfies
$e_j=m$, then
\begin{equation}
\frac1K\sum_{k=1}^K e^{-e_k/\tau_m}
\ge
\frac1K e^{-m/\tau_m},
\end{equation}
which yields the upper bound after applying $-\tau_m\log(\cdot)$.
\end{proof}

Define the ideal and amortized nearest-base energies by
\begin{equation}
m^*(G):=\min_k e_k^*(G),
\qquad
\widehat m(G):=\min_k e_k(G).
\label{eq:supp_nearest_energies}
\end{equation}
Let
\begin{equation}
\mathcal K^*(G):=\arg\min_k e_k^*(G)
\end{equation}
and define the ART excess on the ideal nearest bases as
\begin{equation}
\Delta(G)
:=
\min_{k\in\mathcal K^*(G)}
\left[e_k(G)-e_k^*(G)\right]
\ge0.
\label{eq:supp_delta}
\end{equation}

\begin{theorem}[Amortized coverage bound]
\label{thm:supp_amortized_coverage}
For every graph $G$,
\begin{equation}
m^*(G)
\le
\ell_{\mathrm{mean}}(G)
\le
m^*(G)+\Delta(G)+\tau_m\log K.
\label{eq:supp_coverage_bound}
\end{equation}
\end{theorem}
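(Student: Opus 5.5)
The plan is to sandwich $\ell_{\mathrm{mean}}(G)$ between two applications of Lemma~\ref{lem:supp_softmin}, using the amortized energies $e_k(G)$ as input. The ideal energies $e_k^*(G)$ enter only through the pointwise inequality $e_k(G)\ge e_k^*(G)$.

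That inequality holds because, by Lemma~\ref{lem:supp_art_equivariance}, the Sinkhorn limit $\widehat T_k$ lies in $\Pi(\mu_G,\nu)$. Since $e_k^*(G)$ is the minimum of $\mathcal E(A,B_k;\cdot)$ over this polytope, we get $e_k(G)=\mathcal E(A,B_k;\widehat T_k)\ge e_k^*(G)$ for every $k$. The energies are nonnegative as sums of nonnegative terms, so Lemma~\ref{lem:supp_softmin} applies to $e_1(G),\ldots,e_K(G)$. It gives
\begin{equation*}
\widehat m(G)\le \ell_{\mathrm{mean}}(G)\le \widehat m(G)+\tau_m\log K .
\end{equation*}

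\textbf{Lower bound.} Taking minima over $k$ in $e_k(G)\ge e_k^*(G)$ gives $\widehat m(G)\ge m^*(G)$. Combined with the left inequality above, this yields $m^*(G)\le\ell_{\mathrm{mean}}(G)$.

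\textbf{Upper bound.} The set $\mathcal K^*(G)$ is a nonempty finite set, so pick $k_0\in\mathcal K^*(G)$ attaining the minimum in the definition of $\Delta(G)$. Then $e_{k_0}^*(G)=m^*(G)$ and $e_{k_0}(G)-e_{k_0}^*(G)=\Delta(G)$. Hence
\begin{equation*}
\widehat m(G)\le e_{k_0}(G)=m^*(G)+\Delta(G).
\end{equation*}
Substituting into the right inequality above gives $\ell_{\mathrm{mean}}(G)\le m^*(G)+\Delta(G)+\tau_m\log K$.

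No step is a serious obstacle. The only point needing care is the feasibility of $\widehat T_k$. With a finite number of Sinkhorn iterations the marginals hold only up to $\operatorname{Res}_{\mathrm{marg}}$, and then $e_k\ge e_k^*$ could fail. I will therefore state explicitly that $\widehat T_k$ denotes the converged projection, as in Lemma~\ref{lem:supp_art_equivariance}. A residual-dependent correction would be needed only for the truncated implementation.
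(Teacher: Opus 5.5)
Your proof is correct and follows the paper's argument essentially step for step. Feasibility of the converged Sinkhorn coupling gives $e_k(G)\ge e_k^*(G)$, hence $m^*(G)\le\widehat m(G)$; a minimizer of the excess over $\mathcal K^*(G)$ gives $\widehat m(G)\le m^*(G)+\Delta(G)$; and the normalized soft-min lemma, applied to the amortized energies, closes both sides. Your caveat about finite Sinkhorn iterations also matches the paper, which treats $\widehat T_k$ as the converged projection.
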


\begin{proof}
Because $\widehat T_k\in\Pi(\mu_G,\nu)$ and $e_k^*(G)$ is the minimum of
$\mathcal E(A,B_k;T)$ over the same feasible set,
\begin{equation}
e_k(G)-e_k^*(G)\ge0
\qquad\text{for every }k.
\end{equation}
Taking minima over $k$ gives
\begin{equation}
m^*(G)\le\widehat m(G).
\label{eq:supp_mstar_le_mhat}
\end{equation}

Choose
\begin{equation}
k^\dagger
\in
\arg\min_{k\in\mathcal K^*(G)}
\left[e_k(G)-e_k^*(G)\right].
\end{equation}
By definition,
\begin{equation}
e_{k^\dagger}^*(G)=m^*(G),
\qquad
e_{k^\dagger}(G)-e_{k^\dagger}^*(G)=\Delta(G).
\end{equation}
Therefore,
\begin{align}
\widehat m(G)
&=\min_k e_k(G)
\nonumber\\
&\le e_{k^\dagger}(G)
\nonumber\\
&=m^*(G)+\Delta(G).
\label{eq:supp_mhat_upper}
\end{align}
Applying Lemma~\ref{lem:supp_softmin} to the amortized energies gives
\begin{equation}
\widehat m(G)
\le
\ell_{\mathrm{mean}}(G)
\le
\widehat m(G)+\tau_m\log K.
\end{equation}
Combining this inequality with
Eqs.~\eqref{eq:supp_mstar_le_mhat} and
\eqref{eq:supp_mhat_upper} proves
Eq.~\eqref{eq:supp_coverage_bound}.
\end{proof}

The quantities $\eta(G)$ and $\Delta(G)$ describe complementary effects of
amortization. The former controls the largest coordinate discrepancy over the
entire atlas and therefore appears in pairwise representation fidelity. The
latter only measures excess energy on an ideal nearest base, which is
sufficient for analyzing the soft-min coverage objective.

\begin{corollary}[Finite-atlas coverage consistency]
\label{cor:supp_coverage_consistency}
Let $\epsilon_{\mathcal A}$ be the covering radius of $\mathcal A$ over
$\mathcal X_0$. For every $G\in\mathcal X_0$,
\begin{equation}
0
\le
\ell_{\mathrm{mean}}(G)
\le
\epsilon_{\mathcal A}^2
+
\Delta(G)
+
\tau_m\log K.
\label{eq:supp_coverage_consistency}
\end{equation}
\end{corollary}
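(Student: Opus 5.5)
The proof combines Theorem~\ref{thm:supp_amortized_coverage} with an elementary bound on the ideal nearest-base energy $m^*(G)$ in terms of the covering radius. The argument has three steps.

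\emph{Step 1: the lower bound.} We have $0\le m^*(G)\le \ell_{\mathrm{mean}}(G)$. The first inequality holds because every ideal energy $e_k^*(G)=\min_{T}\mathcal E(A,B_k;T)$ is a minimum of sums of nonnegative terms $(A_{ij}-B_{ab})^2T_{ia}T_{jb}$. The second inequality is the left half of Theorem~\ref{thm:supp_amortized_coverage}.

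\emph{Step 2: relate $m^*(G)$ to $\epsilon_{\mathcal A}$.} We show $m^*(G)\le\epsilon_{\mathcal A}^2$ for every $G\in\mathcal X_0$. By definition,
\begin{equation}
m^*(G)=\min_k e_k^*(G)=\min_k d^2(G,\beta_k)=\Bigl(\min_k d(G,\beta_k)\Bigr)^2 .
\end{equation}
The last equality holds because $t\mapsto t^2$ is monotone on $[0,\infty)$ and each $d(G,\beta_k)\ge0$. Since $G\in\mathcal X_0$, Eq.~\eqref{eq:supp_atlas_radius} gives $\min_k d(G,\beta_k)\le\sup_{G'\in\mathcal X_0}\min_k d(G',\beta_k)=\epsilon_{\mathcal A}$. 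Squaring yields $m^*(G)\le\epsilon_{\mathcal A}^2$.

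\emph{Step 3: the upper bound.} Substituting Step 2 into the right half of Theorem~\ref{thm:supp_amortized_coverage} gives
\begin{equation}
\ell_{\mathrm{mean}}(G)\le m^*(G)+\Delta(G)+\tau_m\log K\le \epsilon_{\mathcal A}^2+\Delta(G)+\tau_m\log K .
\end{equation}

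The only point needing care is that $d$ is defined on the quotient space $\mathcal X$. The quantity $e_k^*(G)=d^2(G,\beta_k)$ must therefore be independent of the representative chosen for $G$ and for $\beta_k$. This follows from the well-definedness argument in Proposition~\ref{prop:supp_relational_geometry}. Beyond that, the argument is a direct chaining of the earlier results, with no substantive obstacle; the squaring step is valid because all quantities involved are nonnegative.
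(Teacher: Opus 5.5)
Your proof is correct and follows the paper's argument: you bound $m^*(G)\le\epsilon_{\mathcal A}^2$ from the definition of the covering radius and substitute this into the upper half of Theorem~\ref{thm:supp_amortized_coverage}. The one small difference is the lower bound. The paper gets $\ell_{\mathrm{mean}}(G)\ge 0$ directly from $e_k(G)\ge 0$, which makes the averaged exponential at most one. You instead chain $0\le m^*(G)\le\ell_{\mathrm{mean}}(G)$ through the theorem. Both routes are valid.
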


\begin{proof}
By the definition of the covering radius, for every
$G\in\mathcal X_0$ there exists an atlas element $\beta_j$ such that
\begin{equation}
d(G,\beta_j)\le\epsilon_{\mathcal A}.
\end{equation}
Therefore,
\begin{align}
m^*(G)
&=\min_k d^2(G,\beta_k)
\nonumber\\
&\le d^2(G,\beta_j)
\le\epsilon_{\mathcal A}^2.
\end{align}
Substituting this inequality into the upper bound of
Theorem~\ref{thm:supp_amortized_coverage} gives
\begin{equation}
\ell_{\mathrm{mean}}(G)
\le
\epsilon_{\mathcal A}^2+\Delta(G)+\tau_m\log K.
\end{equation}
Moreover, $e_k(G)\ge0$ implies
$\exp(-e_k(G)/\tau_m)\le1$. Hence the normalized average inside the
logarithm in Eq.~\eqref{eq:supp_mean_coverage} is at most one and
$\ell_{\mathrm{mean}}(G)\ge0$. Combining the two inequalities proves the
corollary.
\end{proof}

\subsubsection{Population-risk implication}

Define the ideal atlas risk and its trainable surrogate by
\begin{equation}
\mathcal R^*(\mathcal A)
:=
\mathbb E_{G\sim\mathbb P}[m^*(G)],
\qquad
\widehat{\mathcal R}(\mathcal A,\theta)
:=
\mathbb E_{G\sim\mathbb P}[\ell_{\mathrm{mean}}(G)].
\end{equation}
Taking expectations in
Theorem~\ref{thm:supp_amortized_coverage} yields
\begin{equation}
\mathcal R^*(\mathcal A)
\le
\widehat{\mathcal R}(\mathcal A,\theta)
\le
\mathcal R^*(\mathcal A)
+
\mathbb E_G[\Delta(G)]
+
\tau_m\log K.
\label{eq:supp_population_bound}
\end{equation}
Thus, mean relational coverage is a tractable surrogate upper bound on the
ideal finite-atlas risk. Its excess is determined by the amortized alignment
error and the soft-min relaxation.

\subsubsection{Variational interpretation}

Let
\begin{equation}
\Delta_K
=
\left\{
\rho\in\mathbb R_+^K:
\sum_{k=1}^K\rho_k=1
\right\}
\end{equation}
be the probability simplex. Define
\begin{equation}
\Phi(\rho)
=
\sum_{k=1}^K\rho_ke_k
+
\tau_m\sum_{k=1}^K\rho_k\log(K\rho_k).
\label{eq:supp_variational_objective}
\end{equation}

\begin{proposition}[Variational form of mean coverage]
\label{prop:supp_variational_coverage}
The normalized soft-min objective satisfies
\begin{equation}
\ell_{\mathrm{mean}}
=
\min_{\rho\in\Delta_K}\Phi(\rho),
\end{equation}
with minimizer
\begin{equation}
\rho_k^*
=
\frac{\exp(-e_k/\tau_m)}
{\sum_{\ell=1}^K\exp(-e_\ell/\tau_m)}.
\label{eq:supp_variational_responsibility}
\end{equation}
\end{proposition}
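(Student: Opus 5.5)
The plan is to prove the variational identity by the Gibbs variational principle, that is, by rewriting $\Phi$ as a relative entropy plus a constant. Write the Gibbs weights as $\rho^*_k=\exp(-e_k/\tau_m)/Z$ with partition sum $Z=\sum_{\ell}\exp(-e_\ell/\tau_m)>0$. Taking logarithms gives $e_k=-\tau_m\log\rho^*_k-\tau_m\log Z$ for every $k$.

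Substituting this into the linear term of Eq.~\eqref{eq:supp_variational_objective}, and using $\sum_k\rho_k=1$ on $\Delta_K$, I expect
\begin{equation}
\Phi(\rho)=\tau_m\sum_{k=1}^K\rho_k\log\frac{\rho_k}{\rho^*_k}\;-\;\tau_m\log Z+\tau_m\log K .
\end{equation}
The constant equals $-\tau_m\log\bigl[\frac1K\sum_k e^{-e_k/\tau_m}\bigr]=\ell_{\mathrm{mean}}$ by Eq.~\eqref{eq:supp_mean_coverage}. Hence $\Phi(\rho)=\ell_{\mathrm{mean}}+\tau_m\,\mathrm{KL}(\rho\,\|\,\rho^*)$.

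Next I apply Gibbs' inequality. Since $\rho^*$ has strictly positive entries, $\mathrm{KL}(\rho\|\rho^*)$ is finite and nonnegative for every $\rho\in\Delta_K$. It vanishes if and only if $\rho=\rho^*$. This follows from Jensen's inequality applied to the concave function $\log$, or equivalently from $\log x\le x-1$.

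Therefore $\Phi(\rho)\ge\ell_{\mathrm{mean}}$ on the simplex, with equality exactly at $\rho^*$. This shows the minimum is attained, equals $\ell_{\mathrm{mean}}$, and is achieved at the stated (unique) minimizer in Eq.~\eqref{eq:supp_variational_responsibility}.

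The only delicate point is boundary points of $\Delta_K$, where some $\rho_k=0$. There I adopt the convention $0\log 0=0$, under which $\Phi$ is continuous on the compact simplex and the KL identity above still holds termwise. With that convention, no Lagrange multiplier or KKT argument is needed. Everything else is a routine algebraic substitution.
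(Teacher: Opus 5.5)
Your proof is correct, and it takes a different route from the paper's. The paper uses Lagrange multipliers. It adds a multiplier $\alpha$ for $\sum_k\rho_k=1$ only, solves the stationarity condition $e_k+\tau_m[\log(K\rho_k)+1]+\alpha=0$ to get $\rho_k\propto\exp(-e_k/\tau_m)$, and substitutes back to recover $\ell_{\mathrm{mean}}$. You instead rewrite the objective exactly as $\Phi(\rho)=\ell_{\mathrm{mean}}+\tau_m\,\mathrm{KL}(\rho\,\|\,\rho^*)$ and apply Gibbs' inequality; I checked the algebra, including that the constant $-\tau_m\log Z+\tau_m\log K$ equals $\ell_{\mathrm{mean}}$.

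The paper's calculation shows naturally where the Gibbs weights come from. As written, though, it only identifies a critical point. Concluding that this point is the global minimum over $\Delta_K$ needs an unstated appeal to convexity of $\rho\mapsto\sum_k\rho_k\log\rho_k$. The nonnegativity constraints, and hence the boundary of the simplex, are also never addressed.

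Your identity gives global optimality, uniqueness of the minimizer, and validity at boundary points (under $0\log 0=0$) in one step. It also gives a quantitative statement: the suboptimality of any $\rho\in\Delta_K$ is exactly $\tau_m\,\mathrm{KL}(\rho\,\|\,\rho^*)$. The only cost is that $\rho^*$ must be guessed in advance, which is precisely what the Lagrangian computation supplies.
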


\begin{proof}
Introduce a Lagrange multiplier $\alpha$ for
$\sum_k\rho_k=1$. The stationarity condition is
\begin{equation}
e_k
+
\tau_m\left[\log(K\rho_k)+1\right]
+
\alpha
=0.
\end{equation}
Therefore,
\begin{equation}
\rho_k\propto\exp(-e_k/\tau_m),
\end{equation}
and normalization gives
Eq.~\eqref{eq:supp_variational_responsibility}. Substituting the minimizer
into Eq.~\eqref{eq:supp_variational_objective} yields
\begin{align}
\min_{\rho\in\Delta_K}\Phi(\rho)
&=
-\tau_m\log\left[
\frac1K\sum_{k=1}^K\exp(-e_k/\tau_m)
\right]
\nonumber\\
&=\ell_{\mathrm{mean}},
\end{align}
which proves the proposition.
\end{proof}

\subsection{Permutation Invariance of the ART-Full Representation}
\label{app:supp_art_full_invariance}

At transfer time, SCGFM-ART forms energy-based responsibilities
\begin{equation}
w_k(G)
=
\frac{\exp(-e_k(G)/\tau_E)}
{\sum_{\ell=1}^K\exp(-e_\ell(G)/\tau_E)},
\end{equation}
and the coupling mixture
\begin{equation}
T_{\mathrm{mix}}(G)
=
\sum_{k=1}^K w_k(G)\widehat T_k.
\label{eq:supp_tmix}
\end{equation}
The transported feature map and final representation are
\begin{equation}
H(G)=NT_{\mathrm{mix}}(G)^\top X,
\end{equation}
and
\begin{equation}
z(G)
=
\left[q(G)\|\operatorname{vec}(H(G))\right].
\end{equation}

\begin{proposition}[Permutation invariance of ART-full]
\label{prop:supp_art_full_invariance}
Let $P$ be an $N\times N$ permutation matrix and define the relabeled input
by
\begin{equation}
A'=PAP^\top,
\qquad
X'=PX,
\qquad
\mu_G'=P\mu_G.
\end{equation}
Then
\begin{equation}
q(G')=q(G),
\qquad
H(G')=H(G),
\qquad
z(G')=z(G).
\end{equation}
\end{proposition}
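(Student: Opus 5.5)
The argument goes through the couplings: Lemma~\ref{lem:supp_art_equivariance} gives $\widehat T_k' = P\widehat T_k$ for every base, and each remaining quantity is propagated from this. First, the node count and the degree-smoothed measure transform compatibly. $N$ is unchanged, and relabeling permutes degrees, so $\mu_G' = P\mu_G$ is exactly the measure the model computes from $A'$. Likewise, $X_s' = PX_s$, because $\max_j \deg(j)$ is permutation invariant. This confirms that the hypotheses of Lemma~\ref{lem:supp_art_equivariance} hold for the relabeled input as it is actually processed by the model. The lemma then gives $\widehat T_k' = P\widehat T_k$ for all $k$, with the bases $B_k$ and their embeddings $R_k$ unaffected.

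Next, the energies. The final claim of Lemma~\ref{lem:supp_art_equivariance}, or a direct check via Proposition~\ref{prop:supp_fast_energy}, gives invariance term by term. For the graph-side term,
\[
\chi(A') = \sum_{i,j} A'^2_{ij}\mu'_i\mu'_j = \chi(A)
\]
by reindexing. The base-side term $\chi(B_k)$ does not depend on the graph. For the cross term,
\[
\langle A'\widehat T_k', \widehat T_k' B_k\rangle_F = \langle PA\widehat T_k, P\widehat T_k B_k\rangle_F = \langle A\widehat T_k, \widehat T_k B_k\rangle_F,
\]
using $P^\top P = I$. Hence $e_k(G') = e_k(G)$, and so $q_k(G') = q_k(G)$ for every $k$, giving $q(G') = q(G)$. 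Since the responsibilities $w_k$ depend only on the energies, $w_k(G') = w_k(G)$. By linearity, $T_{\mathrm{mix}}(G') = \sum_k w_k(G) P\widehat T_k = P\,T_{\mathrm{mix}}(G)$.

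Finally, the feature map:
\[
H(G') = N\,T_{\mathrm{mix}}(G')^\top X' = N\,T_{\mathrm{mix}}(G)^\top P^\top P X = H(G).
\]
Concatenating this with $q(G') = q(G)$ gives $z(G') = z(G)$.

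The only step I expect to need care is the first one, namely verifying that the whole pipeline up to the logits is equivariant. This requires the GIN to be permutation equivariant, which is standard because sum aggregation and a shared MLP are used. It also requires that the bilinear logit $L_k = (UW_G)(R_kW_B)^\top/\sqrt{D_h}$ satisfy $L_k' = PL_k$. Both facts are already invoked inside Lemma~\ref{lem:supp_art_equivariance}, so I would cite it rather than reprove them.

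A finite-iteration Sinkhorn run is also equivariant, because each row and column scaling commutes with $P$. The result therefore holds for the implemented $S$-step operator as well, not only for the Sinkhorn limit.
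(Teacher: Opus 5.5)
Your proposal is correct and follows the paper's proof essentially step for step: equivariance of the couplings via the ART lemma, invariance of the energies (hence of $q$ and the responsibilities $w_k$), $T_{\mathrm{mix}}(G')=P\,T_{\mathrm{mix}}(G)$ by linearity, and $H(G')=H(G)$ from $P^\top P=I$. Your additional checks do not change the route; they are sound refinements the paper omits. These are that the degree-smoothed measure and $X_s$ transform consistently, that the energy is invariant term by term via the fast identity, and that the finite-iteration Sinkhorn operator is also equivariant.
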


\begin{proof}
By Lemma~\ref{lem:supp_art_equivariance},
\begin{equation}
\widehat T_k'=P\widehat T_k.
\end{equation}
The same lemma also gives energy invariance,
\begin{equation}
e_k(G')=e_k(G)
\qquad\text{for every }k.
\end{equation}
Hence
\begin{equation}
q(G')=q(G)
\end{equation}
and
\begin{equation}
w_k(G')=w_k(G).
\end{equation}
Using Eq.~\eqref{eq:supp_tmix},
\begin{align}
T_{\mathrm{mix}}(G')
&=
\sum_{k=1}^K w_k(G')\widehat T_k'
\nonumber\\
&=
\sum_{k=1}^K w_k(G)P\widehat T_k
\nonumber\\
&=
PT_{\mathrm{mix}}(G).
\end{align}
Therefore,
\begin{align}
H(G')
&=
N T_{\mathrm{mix}}(G')^\top X'
\nonumber\\
&=
N T_{\mathrm{mix}}(G)^\top P^\top PX
\nonumber\\
&=
N T_{\mathrm{mix}}(G)^\top X
=H(G).
\end{align}
Combining $q(G')=q(G)$ and $H(G')=H(G)$ yields
\begin{equation}
z(G')=z(G).
\end{equation}
\end{proof}

\subsection{Sparse Evaluation and Complexity Details}
\label{app:supp_complexity}

For completeness, we detail the complexity stated in the main paper. The
sparse graph encoder requires $O(N+|E|)$ operations for fixed hidden width
and depth. Encoding all relational bases costs $O(KM^2)$. Constructing
$K$ graph--base compatibility matrices costs $O(KNM)$, and $S$ Sinkhorn
iterations cost $O(SKNM)$. By
Eqs.~\eqref{eq:supp_sparse_chi}--\eqref{eq:supp_sparse_AT}, evaluating
$AT_k$ over all bases costs $O(K|E|M)$. The multiplication
$T_kB_k$ costs $O(KNM^2)$, after which the Frobenius inner product is
$O(KNM)$. Hence the complete leading forward complexity is
\begin{equation}
O\left(
N+|E|
+KM^2
+SKNM
+K|E|M
+KNM^2
\right).
\label{eq:supp_time_complexity}
\end{equation}
For fixed architectural parameters $K$, $M$, $S$, and hidden width, this is
linear in the sparse graph representation,
\begin{equation}
O(N+|E|).
\end{equation}

The principal stored quantities are the sparse graph, graph-node embeddings,
$K$ graph--base couplings, and $K$ relational bases. The resulting per-graph
memory complexity is
\begin{equation}
O\left(N+|E|+KNM+KM^2\right).
\label{eq:supp_memory_complexity}
\end{equation}
No term requires an $N\times N$ dense relation tensor.

\subsection{Additional Fixed-Coupling Identity}
\label{app:supp_fixed_coupling}

This final identity is not required by the learning objective or the
transfer representation, but it is useful for implementation verification.
For a feasible coupling $T\in\Pi(\mu_G,\nu)$, let
\begin{equation}
D_\nu=\operatorname{diag}(\nu),
\qquad
\Lambda_T=D_\nu^{-1}T^\top,
\end{equation}
and define the relation transported into the common base-role space as
\begin{equation}
\overline A_T
=
\Lambda_TA\Lambda_T^\top
=
D_\nu^{-1}T^\top A T D_\nu^{-1}.
\label{eq:supp_transported_relation}
\end{equation}
For $C,D\in\mathbb R^{M\times M}$, define
\begin{equation}
\langle C,D\rangle_\nu
=
\sum_{a,b}C_{ab}D_{ab}\nu_a\nu_b,
\qquad
\|C\|_\nu^2=\langle C,C\rangle_\nu.
\end{equation}

\begin{proposition}[Fixed-coupling energy decomposition]
\label{prop:supp_fixed_coupling}
For every feasible $T$,
\begin{equation}
\mathcal E(A,B;T)
=
\|B-\overline A_T\|_\nu^2
+
\mathcal V(A;T),
\label{eq:supp_fixed_coupling_decomp}
\end{equation}
where
\begin{equation}
\mathcal V(A;T)
=
\sum_{i,j}A_{ij}^2\mu_G(i)\mu_G(j)
-
\|\overline A_T\|_\nu^2
\ge0.
\label{eq:supp_transport_variance}
\end{equation}
\end{proposition}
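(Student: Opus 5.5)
The plan is to expand $\mathcal E(A,B;T)$ with Proposition~\ref{prop:supp_fast_energy} and then rewrite each piece in terms of $\overline A_T$. That proposition gives
\[
\mathcal E(A,B;T)=\chi(A)+\chi(B)-2\langle AT,TB\rangle_F .
\]
Separately, I expand $\|B-\overline A_T\|_\nu^2=\|B\|_\nu^2-2\langle B,\overline A_T\rangle_\nu+\|\overline A_T\|_\nu^2$. Two terms match immediately: $\|B\|_\nu^2=\sum_{a,b}B_{ab}^2\nu_a\nu_b=\chi(B)$. The key identification is then the cross term:
\[
\langle B,\overline A_T\rangle_\nu=\sum_{a,b}B_{ab}\,\nu_a\nu_b\,\frac{(T^\top AT)_{ab}}{\nu_a\nu_b}=\operatorname{tr}(B\,T^\top AT)=\langle AT,TB\rangle_F .
\]
This uses symmetry of $A$ and $B$ as in the proof of Proposition~\ref{prop:supp_fast_energy}. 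The diagonal weights $D_\nu^{-1}$ cancel exactly against the $\nu_a\nu_b$ in the inner product. Subtracting, $\mathcal E-\|B-\overline A_T\|_\nu^2=\chi(A)-\|\overline A_T\|_\nu^2=\mathcal V(A;T)$, which is the identity. No properties of $\nu$ beyond positivity are needed here, and positivity holds since $\nu_a=1/M$.

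The substantive step is $\mathcal V(A;T)\ge0$, which I view as a Jensen/variance inequality. Define the conditional distribution $\pi(i\mid a)=T_{ia}/\nu_a$. This is a probability on $V$ for each $a$, because $T^\top\mathbf 1=\nu$. Then $(\overline A_T)_{ab}=\sum_{i,j}\pi(i\mid a)\pi(j\mid b)A_{ij}$ is a conditional expectation of $A_{ij}$ under the product kernel. Cauchy--Schwarz, or Jensen with the convex map $x\mapsto x^2$, gives $(\overline A_T)_{ab}^2\le\sum_{i,j}\pi(i\mid a)\pi(j\mid b)A_{ij}^2$. Multiplying by $\nu_a\nu_b$ and summing over $a,b$ yields
\[
\|\overline A_T\|_\nu^2\le\sum_{i,j}A_{ij}^2\sum_a T_{ia}\sum_b T_{jb}=\sum_{i,j}A_{ij}^2\mu_G(i)\mu_G(j),
\]
where the last equality uses the row marginal $T\mathbf 1=\mu_G$. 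Hence $\mathcal V\ge0$. In fact $\mathcal V$ is exactly the $\nu\otimes\nu$-averaged conditional variance of $A_{ij}$ given the roles $(a,b)$.

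I expect the main obstacle to be bookkeeping rather than ideas. Specifically, I need to confirm that the $D_\nu^{-1}$ normalization makes $\pi(\cdot\mid a)$ a genuine probability and that both marginals are used in the right places: the column marginal makes $\pi$ normalized, and the row marginal recovers $\chi(A)$. I also need to check that the cross-term trace identity holds with the transposes in the right order.
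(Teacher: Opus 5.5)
Your proposal is correct and follows essentially the same route as the paper: you rewrite $\mathcal E$ via the fast-energy identity, and you identify the cross term $\langle B,\overline A_T\rangle_\nu$ with $\sum_{i,j,a,b}A_{ij}B_{ab}T_{ia}T_{jb}=\langle AT,TB\rangle_F$, since the $D_\nu^{-1}$ factors cancel against $\nu_a\nu_b$. You then obtain $\mathcal V(A;T)\ge 0$ by Jensen under the conditional laws $T_{ia}/\nu_a$ followed by the row marginal $T\mathbf 1_M=\mu_G$. Your added remark that $\mathcal V$ is exactly the $\nu\otimes\nu$-averaged conditional variance of $A_{ij}$ is a correct and slightly more informative reading of the same inequality.
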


\begin{proof}
From Eq.~\eqref{eq:supp_transported_relation},
\begin{align}
\langle B,\overline A_T\rangle_\nu
&=
\sum_{a,b}B_{ab}
\left[
\frac{\sum_{i,j}T_{ia}A_{ij}T_{jb}}{\nu_a\nu_b}
\right]
\nu_a\nu_b
\nonumber\\
&=
\sum_{i,j,a,b}A_{ij}B_{ab}T_{ia}T_{jb}.
\end{align}
Using Proposition~\ref{prop:supp_fast_energy},
\begin{align}
\mathcal E(A,B;T)
&=
\sum_{i,j}A_{ij}^2\mu_G(i)\mu_G(j)
+
\|B\|_\nu^2
-2\langle B,\overline A_T\rangle_\nu
\nonumber\\
&=
\|B-\overline A_T\|_\nu^2
+
\mathcal V(A;T),
\end{align}
which proves the decomposition.

It remains to show nonnegativity. Since
\begin{equation}
\sum_i \frac{T_{ia}}{\nu_a}=1,
\end{equation}
the coefficients $T_{ia}/\nu_a$ define a probability distribution over
input nodes conditional on role $a$. Jensen's inequality gives, for each
$(a,b)$,
\begin{equation}
\overline A_T(a,b)^2
\le
\sum_{i,j}A_{ij}^2
\frac{T_{ia}}{\nu_a}
\frac{T_{jb}}{\nu_b}.
\end{equation}
Multiplying by $\nu_a\nu_b$, summing over $(a,b)$, and using
$T\mathbf 1_M=\mu_G$ yields
\begin{equation}
\|\overline A_T\|_\nu^2
\le
\sum_{i,j}A_{ij}^2\mu_G(i)\mu_G(j).
\end{equation}
Therefore $\mathcal V(A;T)\ge0$.
\end{proof}

\end{document}